\documentclass[12pt]{colt2019} 

\usepackage{morenotations}
\title[Adversarial Networks and Autoencoders]{\papertitle}
\usepackage{times}



\coltauthor{%
 \Name{Hisham Husain} \Email{hisham.husain@anu.edu.au}\\
 \addr Australian National University \& Data61
 \AND
 \Name{Richard Nock} \Email{richard.nock@data61.csiro.au}\\
 \addr Data61, the Australian National University, \& the University of Sydney%
 \AND
 \Name{Robert C. Williamson} \Email{bob.williamson@anu.edu.au}\\
 \addr Australian National University \& Data61%
}

\begin{document}
\clearpage\maketitle
\thispagestyle{empty}

\begin{abstract}%
  Since the introduction of Generative Adversarial Networks (GANs) and Variational Autoencoders (VAE), the literature on generative modelling has witnessed an overwhelming resurgence. The impressive, yet elusive empirical performance of GANs has lead to the rise of many GAN-VAE hybrids, with the hopes of GAN level performance and additional benefits of VAE, such as an encoder for feature reduction, which is not offered by GANs. Recently, the Wasserstein Autoencoder (WAE) was proposed, achieving performance similar to that of GANs, yet it is still unclear whether the two are fundamentally different or can be further improved into a unified model. In this work, we study the $f$-GAN and WAE models and make two main discoveries. First, we find that the $f$-GAN and WAE objectives partake in a primal-dual relationship and are equivalent under some assumptions, which then allows us to explicate the success of WAE. Second, the equivalence result allows us to, for the first time, prove generalization bounds for Autoencoder models, which is a pertinent problem when it comes to theoretical analyses of generative models. Furthermore, we show that the WAE objective is related to other statistical quantities such as the $f$-divergence and in particular, upper bounded by the Wasserstein distance, which then allows us to tap into existing efficient (regularized) optimal transport solvers. Our findings thus present the first primal-dual relationship between GANs and Autoencoder models, comment on generalization abilities and make a step towards unifying these models.

\end{abstract}

\begin{keywords}%
  Autoencoders, GANs, $f$-divergences, Wasserstein distance%
\end{keywords}

\section{Introduction}
Implicit probabilistic models \citep{mohamed2016learning} are defined to be the pushforward of a simple distribution $P_Z$ over a latent space $\mathcal{Z}$ through a map $G: \mathcal{Z} \to \mathcal{X}$, where $\mathcal{X}$ is the space of the input data. Such models allow easy sampling, but the computation of the corresponding probability density function is intractable. The goal of these methods is to match $G \# P_Z$ to a target distribution by minimizing $D(P_X,  G\# P_Z)$, for some discrepancy $D(\cdot,\cdot)$ between distributions. An overwhelming number of methods have emerged after the introduction of Generative Adversarial Networks \citep{goodfellow2014generative, nowozin2016f} and Variational Autoencoders \citep{kingma2013auto} (GANs and VAEs), which have established two distinct paradigms: Adversarial (networks) training and Autoencoders respectively. Adversarial  involves a set of functions $\mathcal{D}$, referred to as \textit{discriminators}, with an objective of the form
\begin{align}
  \label{first:adversarial}
D(P_X, G \# P_Z) = \max_{d \in \mathcal{D}} \braces{\E_{x \sim P_X}[a(d(x))] - \E_{x \sim G \# P_Z}[b(d(x))]},
\end{align}
for some functions $a: \mathbb{R} \to \mathbb{R}$ and $b: \mathbb{R} \to \mathbb{R}$. Autoencoder methods involve finding a function $E: \mathcal{X} \to \mathcal{Z}$, referred to as an \textit{encoder}, whose goal is to reverse $G$, and learn a feature space with the objective
\begin{align}
  \label{first:encoder}
D(P_X, G \# P_Z) = \min_{E}  \braces{\mathcal{R}(G,E) + \Omega(E)},
\end{align}
where $\mathcal{R}(G,E)$ is the \textit{reconstruction} loss and acts to ensure $G$ and $E$ reverse each other and $\Omega(E)$ is a regularization term. Much work on Autoencoder methods has focused upon the choice of $\Omega$.

Both methods have their own strengths and limitations, along with differing directions of progress. Indeed, there is a lack of theoretical understanding of how these frameworks are parametrized and it is not clear whether the methods are fundamentally different. For example, Adversarial training based methods have empirically demonstrated high performance when it comes to producing realistic looking samples from $P_X$. However, GANs often have problems in convergence and stability of training \citep{goodfellow2016nips}. Autoencoders, on the other hand, deal with a more well behaved objective and learn an encoder in the process, making them useful for feature representation. However in practice, Autoencoder based methods have reported shortfalls in practice, such as producing blurry samples for image based datasets \citep{tolstikhin2017wasserstein}. This has motivated researchers to borrow elements from Adversarial training in the hopes of achieving GAN performance. Examples include replacing $\Omega$ with Adversarial objectives \citep{mescheder2017adversarial, makhzani2015adversarial} or replacing the reconstruction loss with an adversarial objective \citep{dumoulin2016adversarially, alanov2018pairwise}. Recently, the Wasserstein Autoencoder (WAE) \citep{tolstikhin2017wasserstein} has been shown to subsume these two methods, with an Adversarial based $\Omega$ and has demonstrated performance similar to that of Adversarial methods.

When it comes to directions of progress, Adversarial training methods now have theoretical guarantees on generalization performance \citep{zhang2017discrimination}, however no such theoretical results have been obtained to date for autoencoders. Generalization performance is a pressing concern, since both techniques implicitly assume the samples represent the target distribution \citep{li2018implicit}. A formal connection will benefit both methods, allowing them to inherit strengths from one another.

In this work, we study the two paradigms and in particular focus on the $f$-GANs \citep{nowozin2016f} for Adversarial training and Wasserstein Autoencoders (WAE) for Autoencoders, which generalize the original GAN and VAE models respectively. We prove that the $f$-GAN objective with Lipschitz (with respect to a metric $c$) discriminators is equivalent to the WAE objective with cost $c$. In particular, we show that the WAE objective is an upper bound so as to have
\begin{align*}
f\textrm{-GAN} \leq \textrm{WAE},
\end{align*}
and discuss the tightness of this bound. Our result is a generalization of the Kantorovich-Rubinstein duality and thus suggests a primal-dual relationship between Adversarial and Autoencoder methods. Consequently we show, to the best of our knowledge, the first generalization bounds for autoencoders. Furthermore, using this equivalence, we show that the WAE objective is related to key statistical quantities such as the $f$-divergence and Wasserstein distance, which allows us to tap into efficient (regularized) OT solvers.

We also present another contribution regarding the parametrization of WAE in the Appendix, relating optimization of Brenier potentials in transport theory, to the WAE objective (Section \ref{appendix-contribution}). The main contributions can be summarized as the following: \\[0.5\baselineskip]
\noindent $\triangleright$ (Theorem \ref{fgan-wae-equivalence}) Establish an equivalence between Adversarial training and Wasserstein Autoencoders, showing conditions under which the $f$-GAN and WAE coincide. This further justifies the similar performance of WAE to GAN based methods. When the conditions are not met, we have an inequality, which allows us to comment on the behavior of the methods. \\[0.5\baselineskip]
\noindent $\triangleright$ (Theorem \ref{fdiv-fwae}, \ref{final-equality-condition} and \ref{fwae-entropy}) Show that the WAE objective is related to other statistical quantities such as $f$-divergence, Wasserstein distance and the entropy regularized Wasserstein distance. \\[0.5\baselineskip]
\noindent $\triangleright$ (Theorem \ref{main-gen-result}) Provide generalization bounds for WAE. In particular, this focuses on the empirical variant of the WAE objective, which allows the use of OT solvers as they are concerned with discrete distributions. This allows one to employ efficient (regularized) OT solvers for the estimation of WAE, $f$-GANs and the generalization bounds.

\section{Preliminaries}
\subsection{Notation}
We will use $\mathcal{X}$ to denote the input space (a Polish space), typically taken to be a Euclidean space. We use $\mathcal{Z}$ to denote the latent space, also taken to be Euclidean. We use $\mathbb{N}_{*}$ to denote the natural numbers without $0$: $\mathbb{N} \setminus \braces{0}$. The set $\mathscr{P}$ contains the set of probability measures over $\mathcal{X}$, and elements of this set will be referred to \textit{distributions}. If $P \in \mathscr{P}(\mathcal{X})$ happens to be absolutely continuous with respect to the Lebesgue measure then we will use $dP/dx$ to refer to the \textit{density} function (Radon-Nikodym derivative with respect to the Lebesgue measure). For any $T \in \mathscr{F}(\mathcal{X}, \mathcal{Z})$, for any measure $\mu \in \mathscr{P}(\mathcal{X})$, the pushforward measure of $\mu$ through $T$ denoted $T \# \mu \in \mathscr{P}(\mathcal{Z})$ is such that $T \# \mu (A) = \mu(T^{-1}(A))$ for any measurable set $A \subset \mathcal{Z}$. The set $\mathscr{F}(\mathcal{X}, \mathbb{R})$ refers to all measurable functions from $\mathcal{X}$ into the set $\mathbb{R}$. We will use functions to represent conditional distributions over a space $\mathcal{Z}$ conditioned on elements $\mathcal{X}$, for example $P \in \mathscr{F}(\mathcal{X}, \mathscr{P}(\mathcal{Z}))$ so that for any $x \in \mathcal{X}$, $P(x) = P(\cdot | x) \in \mathscr{P}(\mathcal{Z})$. For any $P \in \mathscr{P}(\mathcal{X})$, the \textit{support} of $P$ is $\text{supp}(P) = \braces{x \in \mathcal{X} : \text{ if }x \in N_x\text{ open} \implies P(N_x) > 0 }$. In any metric space $(\mathcal{X}, c)$, for any set $S \subseteq \mathcal{X}$, we define the \textit{diameter} of $S$ to be $\text{diam}_c(S) = \sup_{x,x' \in S} c(x,x')$. For a metric $c$ over $\mathcal{X}$, then for any $f \in \mathscr{F}(\mathcal{X}, \mathbb{R})$, $\text{Lip}_{c}(f)$ denotes the Lipschitz constant of $f$ with respect to $c$ and $\mathcal{H}_c = \braces{g \in \mathscr{F}(\mathcal{X}, \mathbb{R}) : \text{Lip}_{c}(g) \leq 1}$. For some set $S \subseteq \mathbb{R}$, $\mathbf{1}_S$ corresponds to the convex \textit{indicator function}, ie. $\mathbf{1}_S(x) = 0$ if $x \in S$ and $\mathbf{1}_S(x) = \infty$ otherwise. For any $ x\in \mathcal{X}$, $\delta_{x}: \mathcal{X} \to \braces{0,1}$ corresponds to the \textit{characteristic function}, with $\delta_x(0) = 1$ if $x = 0$ and $\delta_x(0) = 0$ if $x \neq 0$.
\subsection{Background}
\label{sec:background}
\subsubsection{Probability Discrepancies}
Probability discrepancies are central to the objective of finding the best fitting model. We introduce some key discrepancies and their notation, which will appear later.
\begin{definition}[$f$-Divergence]
For a convex function $f:\mathbb{R} \to (-\infty,\infty]$ with $f(1) = 0$, for any $P,Q \in \mathscr{P}(\mathcal{X})$ with $P$ absolutely continuous with respect to $Q$, the $f$-Divergence between $P$ and $Q$ is
\begin{align*}
D_f(P,Q) := \int_{\mathcal{X}} f\bracket{\frac{dP}{dQ}} dQ.
\end{align*}
\end{definition}
In order to compute the $f$-divergence, one can first compute $dP/dQ$ and estimate the integral empirically using samples from $Q$.
\begin{definition}[Integral Probability Metric]
For a fixed function class $\mathcal{F} \subseteq \mathscr{F}(\mathcal{X}, \mathbb{R})$, the Integral Probability Metric (IPM) based on $\mathcal{F}$ between $P,Q \in \mathscr{P}(\mathcal{X})$ is defined as
\begin{align*}
\operatorname{IPM}_{\mathcal{F}}(P,Q) := \sup_{f \in \mathcal{F}} \braces{\int_{\mathcal{X}}f(x) dP(x) - \int_{\mathcal{X}} f(x) dQ(x) }.
\end{align*}
\end{definition}
If we have that $-\mathcal{F} = \mathcal{F}$ then $\operatorname{IPM}_{\mathcal{F}}$ forms a metric over $\mathscr{P}(\mathcal{X})$ \citep{muller1997integral}. A particular IPM we will make use of is Total Variation (TV): $\text{TV}(P,Q) = \operatorname{IPM}_{\mathcal{V}}(P,Q)$ where $\mathcal{V} = \braces{h \in \mathscr{F}(\mathcal{X}, \mathbb{R}) : \card{h} \leq 1}$. We also note that when $f(x) = \card{x-1}$ then $\text{TV} = D_f$ and thus TV is both an IPM and an $f$-divergence.
\begin{definition}
For any $P,Q \in \mathscr{P}(\mathcal{X})$, define the \emph{set of couplings} between $P$ and $Q$ to be
\begin{align*}
\Pi(P,Q) = \braces{\pi \in \mathscr{P}(\mathcal{X} \times \mathcal{X}) : \int_{\mathcal{X}} \pi(x,y) dx = P, \int_{\mathcal{X}} \pi(x,y) dy = Q}.
\end{align*}
For a cost $c: \mathcal{X} \times \mathcal{X} \to \mathbb{R}_{+}$, the \emph{Wasserstein distance} between $P$ and $Q$ is
\begin{align*}
W_c(P,Q) := \inf_{\pi \in \Pi(P,Q)} \braces{\int_{\mathcal{X} \times \mathcal{X}} c(x,y) \pi(x,y) }.
\end{align*}
\end{definition}
The Wasserstein distance can be regarded as an infinite linear program and thus admits a dual form, and in the case of $c$ being a metric, belongs to the class of IPMs, which we summarize in the following lemma \citep{vOT}.
\begin{lemma}[Wasserstein Duality]
	\label{w-duality}
Let $(\mathcal{X}, c)$ be a metric space, and suppose $\mathcal{H}_c$ is the set of all $1$-Lipschitz functions with respect to $c$. Then for any $P,Q \in \mathscr{P}(\mathcal{X})$, we have
\begin{align*}
W_c(P,Q) &= \sup_{h \in \mathcal{H}_c} \braces{\int_{\mathcal{X}} h(x) dP(x) - \int_{\mathcal{X}} h(x) dQ(x) }\\
	      &= \operatorname{IPM}_{\mathcal{H}_c}(P,Q)
\end{align*}
\end{lemma}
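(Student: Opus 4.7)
The plan is to derive the identity from the general Kantorovich duality for optimal transport, and then exploit the metric structure of $c$ to collapse the dual pair $(\phi,\psi)$ into a single $1$-Lipschitz potential.

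First, I would invoke Kantorovich duality, which asserts
\begin{align*}
W_c(P,Q) = \sup_{(\phi,\psi)} \braces{ \int_{\mathcal{X}} \phi(x) \, dP(x) + \int_{\mathcal{X}} \psi(y) \, dQ(y) },
\end{align*}
where the supremum runs over all integrable pairs satisfying $\phi(x) + \psi(y) \leq c(x,y)$ for every $x,y \in \mathcal{X}$. This is the standard linear-programming duality for the Wasserstein infimum on a Polish space, proved via Fenchel--Rockafellar or Hahn--Banach separation; I would cite it from a reference such as Villani rather than reprove it.

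Second, given any admissible pair $(\phi,\psi)$, I would tighten $\psi$ by replacing it with the $c$-transform $\phi^c(y) := \inf_{x \in \mathcal{X}} \braces{c(x,y) - \phi(x)}$. Admissibility forces $\psi \leq \phi^c$ pointwise, so the objective only grows under this replacement. Crucially, because $c$ is a metric the triangle inequality yields $\card{c(x,y) - c(x,y')} \leq c(y,y')$, so $y \mapsto c(x,y)$ is $1$-Lipschitz for each $x$; hence $\phi^c$, as an infimum of $1$-Lipschitz functions, is itself $1$-Lipschitz. Repeating the construction on $\phi$ produces $\phi^{cc}$, which is again $1$-Lipschitz and further improves the objective.

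Third, I would show that for any $1$-Lipschitz $\phi$ the optimal partner is exactly $\psi = -\phi$. The Lipschitz bound $\phi(y) - \phi(x) \leq c(x,y)$ rearranges to $c(x,y) - \phi(x) \geq -\phi(y)$ for every $x$, giving $\phi^c(y) \geq -\phi(y)$; on the other hand, taking $x = y$ in the definition of $\phi^c$ uses $c(y,y) = 0$ to give the matching upper bound $\phi^c(y) \leq -\phi(y)$. Therefore $\phi^c = -\phi$, and Kantorovich duality reduces to
\begin{align*}
W_c(P,Q) = \sup_{h \in \mathcal{H}_c} \braces{\int_{\mathcal{X}} h(x) \, dP(x) - \int_{\mathcal{X}} h(x) \, dQ(x)},
\end{align*}
which is precisely $\operatorname{IPM}_{\mathcal{H}_c}(P,Q)$.

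The main obstacle is Kantorovich duality itself, which is the nontrivial analytic ingredient (tightness/Polish-space arguments plus Hahn--Banach separation). Once it is granted, the reduction to $\mathcal{H}_c$ is a routine manipulation relying only on the triangle inequality and the identity $c(y,y) = 0$.
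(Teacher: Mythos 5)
The paper does not actually prove Lemma~\ref{w-duality}: it is stated as background and cited directly from Villani's book \citep{vOT}, so there is no internal proof to compare against. Your sketch is the standard Kantorovich--Rubinstein argument, and it is essentially correct: invoke general Kantorovich duality, tighten an admissible pair $(\phi,\psi)$ by passing to $c$-transforms, observe that on a metric space $c$-transforms are automatically $1$-Lipschitz, and then use the identity $g^c = -g$ for $1$-Lipschitz $g$ to collapse the pair to $(h,-h)$. Two small points you should make explicit if you write this out in full. First, after replacing $\psi$ by $\phi^c$ you also replace $\phi$ by $\phi^{cc}$; at that point you implicitly use $\phi^{ccc} = \phi^c$ so that the pair $(\phi^{cc}, \phi^c)$ has $\phi^c = (\phi^{cc})^c = -\phi^{cc}$. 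That identity is standard but worth stating, since otherwise your third step (which computes $\phi^c$ for a $1$-Lipschitz $\phi$) does not literally apply to the pair you end up with. Second, your argument gives one direction, namely $W_c(P,Q) \leq \operatorname{IPM}_{\mathcal{H}_c}(P,Q)$; the reverse inequality is the easy observation that any $1$-Lipschitz $h$ yields an admissible pair $(h,-h)$ so that the Lipschitz supremum is a restriction of the Kantorovich supremum. You may want to record that sentence so the equality, and not just one inequality, is established.
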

\subsection{Generative Models}
\label{background:gen-models}
In both GAN and VAE models, we have a latent space $\mathcal{Z}$ (typically taken to be $\mathbb{R}^d$, with $d$ being small) and a prior distribution $P_Z \in \mathscr{P}(\mathcal{Z})$ (eg. unit variance Gaussian). We have a function referred to as the generator $G: \mathcal{Z} \to \mathcal{X}$, which induces the \textit{generated} distribution, denoted by $P_G \in \mathscr{P}(\mathcal{X})$, as the pushforward of $P_Z$ through $G$: $P_G = G \# P_Z$. The true data distribution will be referred to as $P_X \in \mathscr{P}(\mathcal{X})$. The common goal between the two methods is to find a generator $G$ such that the samples generated by pushing forward $P_Z$ through $G$ ($G \# P_Z$) are close to the true data distribution ($P_X$). More formally, one can cast this as an optimization problem by finding the best $G$ such that $D(P_G, P_X)$ is minimized where $D(\cdot, \cdot)$ is some discrepancy between distributions. Both methods (as we outline below) utilize their own discrepancies between $P_X$ and $P_G$, which offer their own benefits and weaknesses.
\subsubsection{Wasserstein Autoencoder}
Let $E: \mathcal{X} \to \mathscr{P}(\mathcal{Z})$ denote a probabilistic \textit{encoder}, which maps each point $x$ to a conditional distribution $E(x) \in \mathscr{P}(\mathcal{Z})$, denoted as the \textit{posterior} distribution. The pushforward of $P_X$ through $E$: $E \# P_X$, will be referred to as the \textit{aggregated posterior}.
\begin{definition}[Wasserstein Autoencoder \citep{tolstikhin2017wasserstein}]
Let $c: \mathcal{X} \times \mathcal{X} \to \mathbb{R}_{\geq 0}$, $\lambda > 0$ and $\Omega: \mathscr{P}(\mathcal{Z}) \times \mathscr{P}(\mathcal{Z}) \to \mathbb{R}_{\geq 0}$ with $\Omega(P,P) = 0$ for all $P \in \mathscr{P}(\mathcal{Z})$. The Wasserstein Autoencoder objective is
\begin{align*}
\operatorname{WAE}_{c, \lambda \cdot \Omega}(P_X,G) = \inf_{E \in \mathscr{F}(\mathcal{X}, \mathscr{P}(\mathcal{Z}))} \braces{\int_{\mathcal{X}} \E_{z \sim E(x)}[c(x,G(z))] dP_X(x) +  \lambda \cdot \Omega (E \# P_X,  P_Z)}
\end{align*}
\end{definition}
We remark that there are various choices of $c$ and $\lambda \cdot \Omega$. \citet{tolstikhin2017wasserstein} select these by tuning $\lambda$ and selecting different probability distortions for $\Omega$.
\subsubsection{$f$-Generative Adversarial Network}
Let $d: \mathcal{X} \to \mathbb{R}$ denote a \textit{discriminator} function.
\begin{definition}[$f$-GAN \citep{nowozin2016f}]
Let $f:\mathbb{R} \to (-\infty,\infty]$ denote a convex function with property $f(1) = 0$ and $\mathcal{D} \subset \mathscr{F}(\mathcal{X}, \mathbb{R})$ a set of discriminators. The $f$-GAN model minimizes the following objective for a generator $G: \mathcal{Z} \to \mathcal{X}$
\begin{align}
\label{gan-objective}
\operatorname{GAN}_f(P_X,G; \mathcal{D}) := \sup_{d \in \mathcal{D}}\braces{\E_{x \sim P_X}[d(x)] - \E_{z \sim P_Z}[f^{*}(d(G(z)))]},
\end{align}
where $f^{\star}(x) = \sup_{y}\braces{x \cdot y - f(y)}$ is the convex conjugate of $f$.
\end{definition}
There are two knobs in this method, namely $\mathcal{D}$, the set of discriminators and the convex function $f$. The objective in (\ref{gan-objective}) is a variational approximation to $D_f$ \citep{nowozin2016f}; if $\mathcal{D} = \mathscr{F}(\mathcal{X}, \mathbb{R})$, then $\operatorname{GAN}_f(P_X,G;\mathcal{D}) = D_f(P_X,P_G)$ \citep{nguyen2010estimating}. In the case of $f(x) = x \log(x) - (x+1)\log(x+1) + 2 \log 2$, we recover the original GAN \citep{goodfellow2014generative}.

\section{Related Work}
Current attempts at building a taxonomy for generative models have largely been within each paradigm or the proposal of hybrid methods that borrow elements from the two. We first review major and relevant advances in each paradigm, and then move on to discuss results that are close to the technical contributions of our work.

The line of Autoencoders begin with $\Omega = 0$, which is the original autoencoder concerned only with reconstruction loss. VAE then introduced a non-zero $\Omega$, along with implementing Gaussian encoders \citep{kingma2013auto}. This was then replaced by an adversarial objective \citep{mescheder2017adversarial}, which is sample based and consequently allows arbitrary encoders. In the spirit of unification, Adversarial Autoencoders (AAE) \citep{makhzani2015adversarial} proposed $\Omega$ to be a discrepancy between the pushforward of the target distribution through the encoder ($E\# P_X$) and the prior distribution ($P_Z$) in the latent space, which was then figured out to be equivalent to the VAE $\Omega$ minus a mutual information term \citep{hoffman2016elbo}. Independently, InfoVAE \citep{zhao2017infovae} proposed a similar objective,  which was then figured out to be equivalent to adding mutual information. \citet{tolstikhin2017wasserstein} then reparametrized the Wasserstein distance into an Autoencoder objective (WAE) where the $\Omega$ term generalizes AAE, and has reported performance comparable to that of Adversarial methods. Other attempts also include adjusting the reconstruction loss to be adversarial as well \citep{dumoulin2016adversarially,alanov2018pairwise}. Another work that focuses on WAE is the Sinkhorn Autoencoders (SAE) \citep{patrini2018sinkhorn}, which select $\Omega$ to be the Wasserstein distance and show that the overall objective is an upper bound to the Wasserstein distance between $P_X$ and $P_G$.

\citet{hu2017unifying} discussed the two paradigms and their unification by interpretting GANs from the perspective of variational inference, which allowed a connection to VAE, resulting in a GAN implemented with importance weighting techniques. While this approach is the closest to our work in forming a link, their results apply to standard VAE (and not other AE methods such as WAE) and cannot be extended to all $f$-GANs. \citet{liu2017approximation} introduced the notion of an Adversarial divergence, which subsumed mainstream adversarial based methods. This also lead to the formal understanding of how the selected discriminator set $\mathcal{D}$ affects the final $G$ learned. However, this approach is silent with regard to Autoencoder based methods. \citet{zhang2017discrimination} established the tradeoff between the Rademacher complexity of the discriminator class $\mathcal{D}$ and generalization performance of $G$, with no results present for Autoencoders. These theoretical advances in Adversarial training methods are inherited by Autoencoders as a consequence of the equivalence presented in our work.

One key point in the proof of our equivalence is the use of a result that decomposes the GAN objective into an $f$-divergence and an IPM for a restricted class of discriminators (which we used for Lipschitz functions). This decomposition is used in \citep{liu2018inductive} and applied to linear $f$-GANs, showing that the adversarial training objective decomposes into a mixture of maximum likelihood and moment matching. \citet{NIPS2018_7771} used this decomposition with Lipschitz discriminators like our work, however does not make any extension or further progress to establish the link to WAE. Indeed, GANs with Lipschitz discriminators have been independently studied in \citep{zhou2018understanding}, which suggest that one should enforce Lipschitz constraints to provide useful gradients.

\section{$f$-Wasserstein Autoencoders}
In the sequel, for any $G \in \mathscr{F}(\mathcal{Z}, \mathcal{X})$ considered, we will be assuming that $\mathcal{X} \subseteq \text{Im}(G)$. We introduce an objective, which we refer to as the $f$-Wasserstein Autoencoder, that will help us in the proof of the main theorems of this paper.
\begin{definition}[$f$-Wasserstein Autoencoder]
Let $c: \mathcal{X} \times \mathcal{X} \to \mathbb{R}$, $\lambda > 0$, $f:\mathbb{R} \to (-\infty,\infty]$ be a convex function (with $f(1) = 0$) and $P_X,P_G$ defined in Section \ref{background:gen-models}. We define the $f$-Wasserstein Autoencoder ($f$-WAE) objective to be
\begin{align}
\overline{W}_{c,\lambda \cdot f}(P_X,G) = \inf_{E \in \mathscr{F}(\mathcal{X}, \mathscr{P}(\mathcal{Z}))} \braces{W_c(P_X, (G \circ E) \# P_X) +  \lambda D_f (E \# P_X,  P_Z)} \label{fwae-def}
\end{align}
\end{definition}
In the proof of the main result, we will show that the $f$-WAE objective is indeed the same as the WAE objective when using the same cost $c$ and selecting the regularizer to be $\lambda \cdot \Omega = D_{\lambda f} = \lambda D_f$. The only difference between this and the standard WAE is the use of $W_c( (G \circ E) \# P_X,P_X)$ as reconstruction instead of the standard cost which is an upper bound (Lemma \ref{fgan-wae:three}), and the regularizer is chosen to be $\lambda \cdot \Omega = D_{\lambda f} = \lambda D_f$. We now present the main theorem that captures the relationship between $f$-GANs, $f$-WAE and WAE.
\begin{theorem}[$f$-GAN and WAE equivalence]
\label{fgan-wae-equivalence}
Suppose $c$ is a metric and let $\mathcal{H}_c$ denote the set of all functions from $\mathcal{X} \to \mathbb{R}$ that are $1$-Lipschitz (with respect to $c$). Let $f:\mathbb{R} \to (-\infty,\infty]$ be a convex function with $f(1) = 0$, then we have for all $\lambda > 0$,
\begin{align}
\operatorname{GAN}_{\lambda f}(P_X,G; \mathcal{H}_c) \leq \operatorname{WAE}_{c,\lambda \cdot D_f}(P_X,G), \label{equivalence-1}
\end{align}
with equality if $G$ is invertible.
\end{theorem}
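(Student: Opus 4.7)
The plan is to sandwich both sides by the intermediate $f$-Wasserstein autoencoder object $\overline{W}_{c, \lambda f}$, proving
\begin{align*}
\operatorname{GAN}_{\lambda f}(P_X, G; \mathcal{H}_c) \leq \overline{W}_{c, \lambda f}(P_X, G) \leq \operatorname{WAE}_{c, \lambda \cdot D_f}(P_X, G)
\end{align*}
in general, and then showing that both inequalities collapse to equalities when $G$ is invertible.

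For the right inequality, the joint law of $(x, G(z))$ under $x \sim P_X, z \sim E(x)$ is a particular coupling of $P_X$ and $(G \circ E)\# P_X$, so $W_c(P_X, (G \circ E)\# P_X)$ is upper bounded by the WAE reconstruction integral while the $D_f$ terms are identical. For the left inequality, fix any $d \in \mathcal{H}_c$ and any encoder $E$, and telescope through the aggregated posterior using $\E_{x \sim (G \circ E)\# P_X}[d(x)] = \E_{z \sim E \# P_X}[d(G(z))]$:
\begin{align*}
\E_{P_X}[d] - \E_{P_Z}[(\lambda f)^*(d \circ G)] = \bracket{\E_{P_X}[d] - \E_{(G \circ E)\# P_X}[d]} + \bracket{\E_{E \# P_X}[d \circ G] - \E_{P_Z}[(\lambda f)^*(d \circ G)]}.
\end{align*}
The first bracket is bounded by $W_c(P_X, (G \circ E)\# P_X)$ via Lemma \ref{w-duality}, and the second by $\lambda D_f(E \# P_X, P_Z)$ via the Fenchel variational form of the $f$-divergence applied with test function $d \circ G$. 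Taking $\sup_d$ on the left and $\inf_E$ on the right closes the bound.

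To upgrade to equalities under invertibility of $G$, I would argue the two halves separately. The equality $\overline{W}_{c, \lambda f} = \operatorname{GAN}_{\lambda f}$ actually does not require invertibility: as $E$ varies, $\mu := E \# P_X$ ranges over all of $\mathscr{P}(\mathcal{Z})$, giving $\overline{W}_{c, \lambda f}(P_X, G) = \inf_{\mu \in \mathscr{P}(\mathcal{Z})}\braces{W_c(P_X, G\# \mu) + \lambda D_f(\mu, P_Z)}$; substituting the Wasserstein dual for $W_c$ and applying Sion's minimax theorem to swap $\sup_{h \in \mathcal{H}_c}$ past $\inf_\mu$ reduces the inner optimization to a Fenchel conjugate $\bracket{\lambda D_f(\cdot, P_Z)}^*(h \circ G)$, which by primal duality with a Lagrange multiplier $c \in \mathbb{R}$ for the normalization $\int d\mu = 1$ equals $\inf_c\braces{c + \E_{P_Z}[(\lambda f)^*(h \circ G - c)]}$. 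The substitution $h \mapsto h - c$ preserves the Lipschitz class $\mathcal{H}_c$ and cancels the multiplier, leaving precisely $\operatorname{GAN}_{\lambda f}(P_X, G; \mathcal{H}_c)$. Invertibility is what unlocks $\overline{W}_{c, \lambda f} = \operatorname{WAE}_{c, \lambda \cdot D_f}$: given any encoder $E$ and an optimal coupling $\pi$ for $W_c(P_X, (G \circ E)\# P_X)$, define $\tilde E(x) := G^{-1}\# \pi(\cdot \mid x) \in \mathscr{P}(\mathcal{Z})$. Then $(G \circ \tilde E)(x) = \pi(\cdot \mid x)$ realises the WAE reconstruction integral as the transport cost of $\pi$, and $\tilde E \# P_X = G^{-1}\# (G \circ E)\# P_X = E \# P_X$ leaves the $D_f$ penalty invariant, so $\operatorname{WAE}_{c, \lambda \cdot D_f} \leq \overline{W}_{c, \lambda f}$.

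The principal obstacle I anticipate is justifying the Sion-type swap on the infinite-dimensional space $\mathscr{P}(\mathcal{Z})$: the saddle functional is convex in $\mu$ and affine in $h$, but one still needs weak-$\ast$ lower semicontinuity of $\mu \mapsto W_c(P_X, G\# \mu) + \lambda D_f(\mu, P_Z)$ together with compactness (or coerciveness) of a sublevel set to invoke the theorem. A secondary technicality is the Fenchel-conjugate computation for $D_f$ when $\mu$ is not absolutely continuous with respect to $P_Z$, which is handled by the standard convention $D_f(\mu, P_Z) = +\infty$ forcing the inner supremum to be attained in the admissible class.
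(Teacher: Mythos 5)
Your proposal takes a genuinely different route to both inequalities and to the equality case, and it exposes an interesting structural point, but it also leaves the equality case resting on a gap you yourself flag and places the invertibility hypothesis at the wrong junction.

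For the inequality $\operatorname{GAN}_{\lambda f} \leq \overline{W}_{c,\lambda f}$, your telescoping decomposition
\begin{align*}
\E_{P_X}[d] - \E_{P_Z}[(\lambda f)^*(d\circ G)] = \bigl(\E_{P_X}[d] - \E_{(G\circ E)\# P_X}[d]\bigr) + \bigl(\E_{E\#P_X}[d\circ G] - \E_{P_Z}[(\lambda f)^*(d\circ G)]\bigr)
\end{align*}
is correct, and is actually more elementary than the paper's route: it bounds the first bracket by Kantorovich--Rubinstein and the second by the Fenchel variational lower bound on $D_{\lambda f}$, avoiding any appeal to the restricted-discriminator duality of \citet{liu2018inductive} (Theorem \ref{liu-theorem}). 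The paper instead invokes that theorem to convert the GAN into $\inf_{P'}\{D_f(P',P_G) + W_c(P',P_X)\}$ and then reparametrizes $P' = (G\circ E)\#P_X$. Your right inequality $\overline{W} \leq \operatorname{WAE}$ via the coupling induced by $x \mapsto (x,G(z))$ is the same as Lemma \ref{fgan-wae:three}.

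Where you differ materially is the equality case, and the logic is inverted relative to the paper. You require $G^{-1}$ in order to pass from an optimal transport plan $\pi$ for $W_c(P_X,(G\circ E)\#P_X)$ to an encoder $\tilde E(x) = G^{-1}\#\pi(\cdot\,|\,x)$; but this is unnecessary. The paper's Lemma \ref{f-WAEisWAE} does the same job without invertibility via the gluing lemma: rather than inverting $G$ on the target, it glues $\pi$ with the coupling $(G(Z),Z)$, $Z\sim Q^*$, along their common marginal and reads off the encoder as the disintegration of $Z$ given $X$ --- the inverse of $G$ is never needed. So $\overline{W} = \operatorname{WAE}$ holds unconditionally, and you have placed invertibility where it is not needed.

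Conversely, your Sion-type argument that $\operatorname{GAN}_{\lambda f} = \overline{W}_{c,\lambda f}$ \emph{always} holds is precisely the step where the paper uses invertibility, and it is the one step that genuinely needs a hypothesis. Weak duality gives only $\operatorname{GAN}_{\lambda f} \leq \overline{W}_{c,\lambda f}$, which the telescoping already established; to reverse it you need either compactness-type conditions for Sion on $\mathscr{P}(\mathcal{Z})$ and $\mathcal{H}_c$ (both non-compact here, since $\mathcal{H}_c$ is translation-invariant and $\mathcal{Z}$ need not be compact), or a constructive argument. The paper's constructive replacement is to reparametrize $P'$ as $(G\circ E)\#P_X$ (Lemma \ref{reparamP-lem}, which uses $G^{-1}$ to pull back any $P'\ll P_G$) and then upgrade the data-processing inequality $D_f(G\#\mu, G\#P_Z) \leq D_f(\mu,P_Z)$ to an equality via Lemma \ref{fgan-wae:two}, again via $G^{-1}$. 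So invertibility is precisely what closes the minimax gap you identified as the ``principal obstacle.'' Until you either establish Sion's conditions in this setting or replace the swap by a constructive argument, the equality half of your proof is incomplete, and in its current form the hypothesis that $G$ be invertible is doing no work for the step that needs it and unnecessary work for a step that does not.
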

\begin{proof}
\textbf{(This is a sketch, see Section \ref{proof:fgan-wae-equivalence} for full proof)}. The proof begins by proving certain properties of $\mathcal{H}_c$ (Lemma \ref{hc-regular}), allowing us to use the dual form of restricted GANs (Theorem \ref{liu-theorem}),
\begin{align}
\operatorname{GAN}_{f}(P_X,G; \mathcal{H}_c) &= \inf_{P' \in \mathscr{P}(\mathcal{X})} \braces{D_f(P',P_G) + \sup_{h \in \mathcal{H}_c}\braces{\E_{P_X}[h] - \E_{P'}[h] } } \nonumber\\
                                             &= \inf_{P' \in \mathscr{P}(\mathcal{X})} \braces{D_f(P',P_G) + W_c(P',P_X)}. \label{dual-fgan}
\end{align}
The key is to reparametrize (\ref{dual-fgan}) by optimizing over couplings. By rewriting $P' = (G \circ E) \# P_X$ for some $E \in \mathscr{F}(\mathcal{X}, \mathscr{P}(\mathcal{Z}))$ and rewriting (\ref{dual-fgan}) as an optimization over $E$. This is justified by Lemma \ref{reparamP-lem}. We obtain
\begin{align}
&\inf_{P' \in \mathscr{P}(\mathcal{X})} \braces{D_f(P',P_G) + W_c(P',P_X)} \nonumber \\
&= \inf_{E \in \mathscr{F}(\mathcal{X}, \mathscr{P}(\mathcal{Z}))} \braces{D_f((G \circ E) \# P_X,P_G) + W_c((G \circ E) \# P_X,P_X)} \label{equivalence:one}
\end{align}
We then have
\begin{align*}
D_f((G \circ E) \# P_X,P_G) = D_f(G \# (E \# P_X), G \# P_Z)
                           &\stackrel{(*)}{\leq} D_f(E\#P_X, P_Z),
\end{align*}
with equality in $(*)$ if $G$ is invertible (Lemma \ref{fgan-wae:two}). A weaker condition is required if $f$ is differentiable, namely if $G$ is invertible with respect to $f' \circ d(E\#P_X)/dP_Z$ in the sense that
\begin{align}
G(z) = G(z') \implies  f' \circ (d(E\#P_X)/dP_Z)(z) =  f' \circ (d(E\#P_X)/dP_Z)(z'), \label{condition}
\end{align}
noting that an invertible $G$ trivially satisfies this requirement. Letting $f \gets \lambda f$, we have $D_{f} (\cdot,\cdot) \gets \lambda D_{f} (\cdot,\cdot)$, and so from Equation \ref{equivalence:one}, we have
\begin{align*}
\operatorname{GAN}_{\lambda f}(P_X,G; \mathcal{H}_c) &\stackrel{(*)}{\leq} \inf_{E \in \mathscr{F}(\mathcal{X}, \mathscr{P}(\mathcal{Z}))} \braces{\lambda D_f(E\#P_X, P_Z) + W_c((G \circ E) \# P_X,P_X)}\\
                                             &= \overline{W}_{c,\lambda \cdot f}\\
                                             &\leq \inf_{E \in \mathscr{F}(\mathcal{X}, \mathscr{P}(\mathcal{Z}))} \braces{  \lambda D_f (E \# P_X,  P_Z) + \int_{\mathcal{X}} \E_{z \sim E(x)}[c(x,G(z))] dP_X(x)}\\
                                             &= \operatorname{WAE}_{c,\lambda \cdot D_f}(P_X,G),
\end{align*}
where the final inequality follows from the fact that $W_c(P,Q)  \leq  \int_{\mathcal{X}} \E_{z \sim E(x)}[c(x,G(z))] dP_X(x)$ (Lemma \ref{fgan-wae:three}). Using the fact that $\overline{W} \geq \operatorname{WAE}$ (Lemma \ref{f-WAEisWAE}) completes the proof.
\end{proof}
When $G$ is invertible, we remark that $P_G$ can still be expressive and capable of modelling complex distributions in WAE and GAN models. For example, if $G$ is implemented with feedforward neural networks, and $G$ is invertible then $P_G$ can model \textit{deformed} exponential families \citep{nock2017f}, which encompasses a large class appearing in statistical physics and information geometry \citep{amari2016information, borland1998ito}. There exists many invertible activation functions under which $G$ will be invertible. Furthermore, in the proof of the Theorem it is clear that $\overline{W}$ and $\operatorname{WAE}$ are the same objective (from Lemma \ref{fgan-wae:three} and Lemma \ref{f-WAEisWAE}). When using $f = \mathbf{1}_{\braces{1}}$ ($f(x) = 0$ if $x = 1$ and $f(x) = \infty$ otherwise), and noting that $f^{\star}(x) = x$, meaning that Theorem \ref{fgan-wae-equivalence} (with $\lambda = 1$) reduces to
\begin{align*}
\sup_{h \in \mathcal{H}_c} \braces{\E_{x \sim P_X}[h(x)] - \E_{x \sim P_G}[h(x)]} &= \operatorname{GAN}_{f}(P_X,G; \mathcal{H}_c)\\
                              &\leq \overline{W}_{c,f}(P_X,P_G)\\
                              &= \inf_{E \in \mathscr{F}(\mathcal{X}, \mathscr{P}(\mathcal{Z})) : E \# P_X = P_Z } \braces{W_c(P_X, (G \circ E) \# P_X)}\\
                              &= \inf_{E \in \mathscr{F}(\mathcal{X}, \mathscr{P}(\mathcal{Z})) : E \# P_X = P_Z } \braces{W_c(P_X, G \# P_Z}\\
                              &= W_c(P_X,P_G),
\end{align*}
which is the standard primal-dual relation between Wasserstein distances as in Lemma \ref{w-duality}. Hence, Theorem \ref{fgan-wae-equivalence} can be viewed as a generalization of this primal-dual relationship, where Autoencoder and Adversarial objectives represent primal and dual forms respectively.

We note that the left handside of Equation (\ref{equivalence-1}) does not explicitly engage the prior space $Z$ as much as the right hand side in the sense that one can set $Z = \mathcal{X}$, $G = \text{Id}$ (which is invertible) and $P_Z = P_G$ and indeed results in the exact same $f$-GAN objective since $G \# P_Z = \text{Id} \# P_G = P_G$, yet the equivalent $f$-WAE objective (from Theorem \ref{fgan-wae-equivalence}) will be different. This makes the Theorem versatile in reparametrizations, which we exploit in the proof for Theorem \ref{final-equality-condition}. We now consider weighting the reconstruction along with the regularization term in $\overline{W}$ (which is equivalent to weighting WAE), which simply amounts to re-weighting the cost since for any $\gamma > 0$,
  \begin{align*}
  \overline{W}_{\gamma \cdot c, \lambda \cdot f}(P_X,G) = \inf_{E \in \mathscr{F}(\mathcal{X}, \mathscr{P}(\mathcal{Z}))} \braces{\gamma W_c((G \circ E) \# P_X, P_X) + \lambda D_f(E \# P_X,P_Z) }.
\end{align*}
The idea of weighting the regularization term by $\lambda$ was introduced by \citep{higgins2016beta} and furthermore studied empirically, showing that the choice of $\lambda$ influences learning disentanglement  in the latent space. \citep{alemi2018fixing}. We show that if $\lambda = 1$ and  $\gamma$ is larger than some $\gamma^{*}$ then $\overline{W}$ will become an $f$-divergence (Theorem \ref{fdiv-fwae}). On the other hand if we fix $\gamma = 1$ and take $\lambda$ is larger than some $\lambda^{*}$, then $\overline{W}$ becomes the Wasserstein distance and in particular, all equalities hold in (\ref{equivalence-1}) (Theorem \ref{final-equality-condition}). We show explicitly how high $\gamma$ and $\lambda$ need to be for such equalities to occur. Since $f$-divergence and Wasserstein distance are quite different distortions in terms of their properies, this gives an interpretation on weighting each term.

We now outline the $f$-divergence case. We will be focusing on $f:\mathbb{R} \to (-\infty,\infty]$ convex, differentiable and $f(1) = 0$. In the case we assume that $P_X$ is absolutely continuous with respect to $P_G$, so that $D_f(P_X,P_G) < \infty$. We then have the following
\begin{theorem}
\label{fdiv-fwae}
Set $c(x,y) = \delta_{x-y}$ and let $f:\mathbb{R} \to (-\infty,\infty]$ be a convex function (with $f(1) = 0$) and differentiable. Let $\gamma^{*} = \sup_{x \in \mathcal{X}} \card{f'\bracket{\frac{dP_X}{dP_G}} - f'(0) }$ and suppose $P_G$ is absolutely continuous with respect to $P_X$ and that $G$ is invertible, then we have for all $\gamma \geq \gamma^{*}$
\begin{align*}
\overline{W}_{\gamma \cdot c, f}(P_X,G) = D_f(P_X,P_G).
\end{align*}
\end{theorem}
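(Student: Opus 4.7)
The plan is to reduce the $f$-WAE objective to an infimum over measures $P' \in \mathscr{P}(\mathcal{X})$ using invertibility of $G$, and then to show that this infimum is attained at $P' = P_X$ precisely when $\gamma \geq \gamma^{*}$. The first step invokes the reparametrization already established in the proof of Theorem~\ref{fgan-wae-equivalence}: invertibility of $G$ makes $(G \circ E)\# P_X$ range over all of $\mathscr{P}(\mathcal{X})$ as $E$ varies, and additionally $D_f((G \circ E)\# P_X, P_G) = D_f(E \# P_X, P_Z)$. Substituting this into the definition of $\overline{W}_{\gamma c, f}$, one obtains
\begin{align*}
\overline{W}_{\gamma c, f}(P_X, G) = \inf_{P' \in \mathscr{P}(\mathcal{X})} \braces{\gamma W_c(P', P_X) + D_f(P', P_G)},
\end{align*}
and the upper bound $\overline{W}_{\gamma c, f}(P_X, G) \leq D_f(P_X, P_G)$ follows immediately by taking $P' = P_X$.

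For the matching lower bound, I aim to show that $P' \mapsto D_f(P', P_G)$ is Lipschitz from above, with constant $\gamma^{*}$, with respect to $W_c$. The subgradient inequality for a convex differentiable $f$ gives $f(u_1) - f(u_2) \leq f'(u_1)(u_1 - u_2)$; applying this pointwise with $u_1 = dP_X/dP_G$ and $u_2 = dP'/dP_G$ and integrating against $P_G$ yields
\begin{align*}
D_f(P_X, P_G) - D_f(P', P_G) \leq \int f'\bracket{\frac{dP_X}{dP_G}} d(P_X - P').
\end{align*}
Writing $\phi = f'(dP_X/dP_G)$, the key structural fact is that $\phi \geq f'(0)$ pointwise, by monotonicity of $f'$ and nonnegativity of the density ratio. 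Since $P_X - P'$ integrates to zero, the constant $f'(0)$ may be subtracted freely inside the integral. With $c(x,y) = \delta_{x-y}$ the discrete metric, $W_c$ coincides with total variation and equals $(P_X - P')^{+}(\mathcal{X}) = (P_X - P')^{-}(\mathcal{X})$ via Jordan decomposition. Splitting the integral into Jordan parts and discarding the contribution of the negative part using $\phi - f'(0) \geq 0$ produces
\begin{align*}
\int (\phi - f'(0)) \, d(P_X - P') \leq \sup_x (\phi(x) - f'(0)) \cdot (P_X - P')^{+}(\mathcal{X}) = \gamma^{*} W_c(P_X, P').
\end{align*}
For $\gamma \geq \gamma^{*}$ and every $P'$ this gives $\gamma W_c(P_X, P') + D_f(P', P_G) \geq D_f(P_X, P_G)$, and taking the infimum over $P'$ completes the argument.

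The main obstacle I anticipate is recovering the sharp constant $\gamma^{*}$ rather than a naive $2\gamma^{*}$ coming from a uniform bound on $\phi - f'(0)$. Circumventing this requires the Jordan-decomposition step, together with the pointwise lower bound $\phi \geq f'(0)$, to annihilate the contribution on the negative part of $P_X - P'$. A secondary technical point is restricting the infimum over $P'$ to measures absolutely continuous with respect to $P_G$ (otherwise $D_f(P', P_G) = +\infty$), which combined with the standing hypothesis $P_X \ll P_G$ makes all the densities used above well-defined.
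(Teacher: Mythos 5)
Your proof is correct, but it takes the primal route where the paper takes the dual one. The paper invokes Theorem~\ref{fgan-wae-equivalence} to rewrite $\overline{W}_{\gamma c, f}$ as $\operatorname{GAN}_f(P_X,G;\mathcal{H}_{\gamma c})$, observes that the restricted variational objective is always a lower bound for $D_f$, and then closes the gap by showing that the unrestricted maximizer $h = f'(dP_X/dP_G)$ already lies in $\mathcal{H}_{\gamma c}$ --- i.e.\ it has oscillation at most $\gamma^{*}$, exactly because $dP_X/dP_G > 0$ forces $f'(dP_X/dP_G) \geq f'(0)$. You instead stay on the WAE side, reparametrize to $\inf_{P'}\{\gamma W_c(P',P_X) + D_f(P',P_G)\}$, get the upper bound from $P'=P_X$, and prove the lower bound directly via the gradient inequality for convex $f$, subtracting the constant $f'(0)$ (using that $P_X-P'$ has mass zero) and discarding the negative Jordan part via the same pointwise bound $f'(dP_X/dP_G) \geq f'(0)$. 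The two arguments hinge on the identical structural fact, but you package it as a one-sided Lipschitz estimate on $P' \mapsto D_f(P',P_G)$ in $W_c$ at $P' = P_X$, whereas the paper packages it as membership of the known $D_f$-witness in a restricted discriminator class. Yours is somewhat more self-contained (it avoids the variational characterization of $D_f$ and Theorem~\ref{liu-theorem}), while the paper's is shorter given that the equivalence theorem is already in hand. Both recover the sharp constant $\gamma^{*}$ for the same underlying reason, and both implicitly require the mild integrability of $f'(dP_X/dP_G)$ against $P_X$ and $P'$.
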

(Proof in Appendix, Section \ref{proof:fdiv-fwae}). One may actually pick the following value
\begin{align*}
\sup_{x,x' \in \mathcal{X}} \card{f'\bracket{\frac{dP_X}{dP_G}}(x) - f'\bracket{\frac{dP_X}{dP_G}}(x') } \leq \gamma^{*},
\end{align*}
for $\gamma$ for Theorem \ref{fdiv-fwae} to hold, noting that it is smaller than $\gamma^{*}$ since $f'$ is increasing ($f$ is convex) and  $dP_X/dP_G > 0$. It is important to note that $W_c (P_X,P_G) = \text{TV}(P_X,P_G)$ when $c(x,y) = \delta_{x-y}$ and so Theorem \ref{fdiv-fwae} tells us that the objective with a weighted total variation reconstruction loss with a $f$-divergence prior regularization amounts to the $f$-divergence. It was shown that in \citep{nock2017f} that when $G$ is an invertible feedforward neural network then $D_f(P_X,P_G)$ is a \textit{bregman} divergence (a well regarded quantity in information geometry) between the parametrizations of the network for a particular choice of activation function for $G$, which depends on $f$. Hence, a practioner should design $G$ with such activation function when using $f$-WAE under the above setting ($c(x,y) = \delta_{x-y}$ and $\gamma = \gamma^{*}$) with $G$ being invertible, so that the information theoretic divergence ($D_f$) between the distributions becomes an information geometric divergence involving the network parameters.

We now show that if $\lambda$ is selected high enough then $\overline{W}$ becomes $W_c$ and furthermore we have equality between $f$-GAN, $f$-WAE and WAE.
\begin{theorem}
  \label{final-equality-condition}
Let $c: \mathcal{X} \times \mathcal{X} \to \mathbb{R}$ be a metric. For any $f:\mathbb{R} \to (-\infty,\infty]$ convex function (with $f(1) = 0$), letting $\lambda^{*} = \sup_{P' \in \mathscr{P}(\mathcal{X})}\bracket{W_c(P',P_G)/D_f(P',P_G)}$, we have for all $\lambda \geq \lambda^{*}$
\begin{align*}
\operatorname{GAN}_{\lambda f}(P_X,G; \mathcal{H}_c) = \overline{W}_{c, \lambda \cdot f}(P_X,G) = \operatorname{WAE}_{c,\lambda \cdot D_f}(P_X,G) = W_c(P_X,P_G).
\end{align*}
\end{theorem}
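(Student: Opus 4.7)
The plan is to reduce all three objectives to $W_c(P_X, P_G)$ by combining the dual form of $\operatorname{GAN}_{\lambda f}$ derived inside the proof of Theorem \ref{fgan-wae-equivalence} with a triangle inequality argument powered by the definition of $\lambda^{*}$, and then sandwiching $\overline{W}$ using a constant stochastic encoder.

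First I would invoke Theorem \ref{liu-theorem} together with Wasserstein duality (Lemma \ref{w-duality}) to rewrite
\begin{align*}
\operatorname{GAN}_{\lambda f}(P_X, G; \mathcal{H}_c) = \inf_{P' \in \mathscr{P}(\mathcal{X})} \braces{\lambda D_f(P', P_G) + W_c(P', P_X)},
\end{align*}
exactly as in the proof of Theorem \ref{fgan-wae-equivalence}. The choice $P' = P_G$ yields the upper bound $W_c(P_X, P_G)$, using $D_f(P_G, P_G) = 0$ and the symmetry of $W_c$ (which holds because $c$ is a metric). For the matching lower bound, I would use that $W_c$ is itself a metric on $\mathscr{P}(\mathcal{X})$ and apply the triangle inequality $W_c(P_X, P_G) \leq W_c(P_X, P') + W_c(P', P_G)$. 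The definition of $\lambda^{*}$ together with $\lambda \geq \lambda^{*}$ forces $W_c(P', P_G) \leq \lambda D_f(P', P_G)$ for every $P' \in \mathscr{P}(\mathcal{X})$, hence
\begin{align*}
W_c(P_X, P_G) \leq W_c(P_X, P') + \lambda D_f(P', P_G),
\end{align*}
and taking the infimum over $P'$ delivers $\operatorname{GAN}_{\lambda f}(P_X, G; \mathcal{H}_c) = W_c(P_X, P_G)$.

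For the remaining two objectives, the chain already established inside the proof of Theorem \ref{fgan-wae-equivalence} gives $\operatorname{GAN}_{\lambda f} \leq \overline{W}_{c, \lambda \cdot f}$, so $\overline{W}_{c, \lambda \cdot f}(P_X, G) \geq W_c(P_X, P_G)$ by what we have just shown. The reverse inequality I would obtain from the constant stochastic encoder $E(x) \equiv P_Z$: it forces $E \# P_X = P_Z$, so $D_f(E \# P_X, P_Z) = 0$, while $(G \circ E) \# P_X = G \# P_Z = P_G$ collapses the reconstruction term to $W_c(P_G, P_X) = W_c(P_X, P_G)$. Hence $\overline{W}_{c, \lambda \cdot f}(P_X, G) = W_c(P_X, P_G)$, and Lemma \ref{f-WAEisWAE} then transfers the equality to $\operatorname{WAE}_{c, \lambda \cdot D_f}(P_X, G)$.

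The only nontrivial ingredient is the triangle-inequality/$\lambda^{*}$ bookkeeping for the GAN dual; everything else is a direct assembly of pieces already packaged in Theorem \ref{fgan-wae-equivalence}. Conceptually, $\lambda^{*}$ is precisely the threshold at which the $f$-divergence penalty dominates the Wasserstein increment $W_c(\cdot, P_G)$, so the infimum over $P'$ concentrates on $P' = P_G$ and the three objectives collapse onto the Wasserstein distance between data and generated distributions.
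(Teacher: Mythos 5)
Your proof is correct and rests on the same core lever as the paper's: the triangle inequality for $W_c$ combined with the fact that $\lambda \geq \lambda^{*}$ forces $W_c(P',P_G) \leq \lambda D_f(P',P_G)$ for every $P'$. Where you genuinely differ is in packaging. The paper proves the lower bound $\operatorname{GAN}_{\lambda f} \geq W_c(P_X,P_G)$ via Lemma \ref{fgan-wc-lem}, which first re-invokes Theorem \ref{fgan-wae-equivalence} with an auxiliary identity generator ($\tilde{\mathcal{Z}} = \mathcal{X}$, $\tilde{G} = \operatorname{Id}$, $P_{\tilde{Z}} = P_G$) to rewrite $\operatorname{GAN}_{\lambda f}$ as an $f$-WAE objective $\inf_{E}\{W_c(E\#P_X,P_X) + \lambda D_f(E\#P_X,P_G)\}$, and only then applies the triangle inequality. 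You instead stay in the dual variable $P'$ from the Liu-type decomposition $\operatorname{GAN}_{\lambda f} = \inf_{P'}\{W_c(P',P_X) + \lambda D_f(P',P_G)\}$ and apply the triangle inequality there directly; this is the same inequality (the paper's $E\#P_X$ is your $P'$) but avoids the detour and makes the role of $\lambda^{*}$ transparent. Your upper bound via $P' = P_G$ is a clean alternative to the paper's, which obtains $\operatorname{WAE} \leq W_c(P_X,P_G)$ by restricting to encoders with $E\#P_X = P_Z$ and invoking Tolstikhin et al.'s reparametrization.

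One small imprecision to flag: to extend the equality to $\operatorname{WAE}$, you cite Lemma \ref{f-WAEisWAE}, whose stated inequality $\overline{W} \leq \operatorname{WAE}$ only gives you the one direction. You also need $\operatorname{WAE} \leq W_c(P_X,P_G)$ (or equivalently $\operatorname{WAE} \leq \overline{W}$), and your constant encoder $E(x) \equiv P_Z$ does not deliver this directly, since for the WAE reconstruction it only yields $\operatorname{WAE} \leq \int_{\mathcal{X}}\E_{z \sim P_Z}[c(x,G(z))]\,dP_X(x)$, which can exceed $W_c(P_X,P_G)$. The missing step is the Tolstikhin et al.\ reparametrization (used in the paper at (\ref{wae-theorem-use})): $W_c(P_X,P_G) = \inf_{E : E\#P_X = P_Z}\int_{\mathcal{X}}\E_{z\sim E(x)}[c(x,G(z))]\,dP_X(x)$, which gives $\operatorname{WAE} \leq W_c(P_X,P_G)$. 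Alternatively, one can invoke the $\overline{W} = \operatorname{WAE}$ equality already established inside the proof of Theorem \ref{fgan-wae-equivalence}. Either patch is a one-liner; the substance of your argument is sound.
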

(Proof in Appendix, Section \ref{proof:final-equality-condition}). Note that Theorem \ref{final-equality-condition} holds for any $f$ (satisfying properties of the Theorem) and so one can estimate the Wasserstein distance using any $f$ as long as $\lambda$ is scaled to $\lambda^{*}$. In order to understand the quantity
\begin{align*}
\lambda^{*} = \sup_{P' \in \mathscr{P}(\mathcal{X})}\bracket{W_c(P',P_G)/D_f(P',P_G)},
\end{align*}
there are two extremes in which the supremum may be unbounded. The first case is when $P'$ is taken far from $P_G$ so that $W_c(P',P_G)$ increases, however one should note that in the case when $\Delta = \max_{x, x' \in \mathcal{X}} c(x,x') < \infty$ then $W_c \in [0,\Delta]$ and so $W_c$ will be finite whereas $D_f(P',P_G)$ can possibly diverge to $\infty$, making $\lambda^{*} \to 0$. The other case is when $P'$ is made close to $P_G$, in which case $\frac{1}{D_f(P',P_G)} \to \infty$ however $W_c(P',P_G) \to 0$ so the quantity $\lambda^{*}$ can still be small in this case, depending on the rate of decrease between $W_c$ and $D_f$. Now suppose that $f(x) = \card{x-1}$ and $c(x,y) = \delta_{x-y}$, in which case $D_f = W_c$ and thus $\lambda^{*} = 1$. In this case, Theorem \ref{final-equality-condition} reduces to the standard result regarding the equivalence between Wasserstein distance and $f$-divergence intersecting at the variational divergence under these conditions.
\section{Generalization bounds}
\label{sec:gen-bounds}
We prove generalization bounds using machinery developed in \citep{weed2017sharp} and thus introduce their definitions and notations.
\begin{definition}
For a set $S \subseteq \mathcal{X}$, we denote $N_{\eta}(S)$ to be the $\eta$-covering number of $S$, which is the smallest $m \in \mathbb{N}_{*}$ such that there exists closed balls $B_1,\ldots,B_m$ of radius $\eta$ with $S \subseteq \bigcup_{i=1}^m B_i$.
\end{definition}
\begin{definition}
For any $P \in \mathscr{P}(\mathcal{X})$, the $(\eta, \tau)$-covering number is
\begin{align*}
N_{\eta}(P,\tau) = \inf\braces{N_{\eta}(S) : P(S) \geq 1 - \tau},
\end{align*}
and the $(\eta, \tau)$-dimension is
\begin{align*}
d_{\eta}(P, \tau) = \frac{\log N_{\eta}(P, \tau) }{-\log \eta}.
\end{align*}
\end{definition}
\begin{definition}
The $1$-Upper Wasserstein dimension of $P \in \mathscr{P}(\mathcal{X})$ is
\begin{align*}
d^{*}(P) = \inf\braces{s \in (2, \infty) : \limsup_{\eta \to 0} d_{\eta} (P, \eta^{\frac{s}{s -  2}}) \leq s }.
\end{align*}
\end{definition}
We make an assumption of $P_X$ and $P_G$ having bounded support to achieve the following bounds. For any $P \in \mathscr{P}(\mathcal{X})$ in a metric space $(\mathcal{X}, c)$, we use define $\Delta_{P,c} = \text{diam}_c (\text{supp}(P))$. We are now ready to present the generalization bounds.
\begin{theorem}
  \label{main-gen-result}
  Let $(\mathcal{X}, c)$ be a metric space and suppose $\Delta := \max\braces{\Delta_{c,P_X}, \Delta_{c,P_G}} < \infty$. For any $n \in \mathbb{N}_{*}$, let $\hat{P}_X$ and $\hat{P}_G$ denote the empirical distribution with $n$ samples drawn i.i.d from $P_X$ and $P_G$ respectively. Let $s_X > d^{*}(P_X)$ and $s_G > d^{*}(P_G)$. For all $f:\mathbb{R} \to (-\infty,\infty]$ convex functions, $f(1) = 0$ and $\lambda > 0$, we have
  \begin{align}
  \operatorname{GAN}_{\lambda f}(P_X,G; \mathcal{H}_c)  \leq \overline{W}_{c,\lambda \cdot f}(\hat{P}_X, P_G) +  O\bracket{n^{-1/s_X} + \Delta\sqrt{\frac{1}{n}\ln\bracket{\frac{1}{\delta}}}} , \label{main:first}
  \end{align}
  with probability at least $1 - \delta$ for any $\delta \in (0,1)$ and if $f(x) = \card{x-1}$ is chosen then we have for all $\lambda > 0$
  \begin{align}
    \operatorname{GAN}_{\lambda f}(P_X,G; \mathcal{H}_c)  \leq \overline{W}_{c,\lambda \cdot f}(\hat{P}_X, \hat{P}_G)+ O\bracket{n^{-1/s_X} + n^{-1/s_G} + \Delta \sqrt{\frac{1}{n}\ln\bracket{\frac{1}{\delta}}}}, \label{main:first}
  \end{align}
  with probability at least $1 - \delta$ for any $\delta \in (0,1)$.
\end{theorem}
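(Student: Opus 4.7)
The plan is to combine the dual reformulation of the $f$-GAN from the proof of Theorem~\ref{fgan-wae-equivalence}, the triangle inequality for $W_c$, the sharp empirical Wasserstein rate of \citet{weed2017sharp}, and a bounded-differences concentration argument. Throughout I use the identity established in (\ref{dual-fgan}) with $f \leftarrow \lambda f$:
\begin{align*}
\operatorname{GAN}_{\lambda f}(P_X,G; \mathcal{H}_c) = \inf_{P' \in \mathscr{P}(\mathcal{X})} \braces{\lambda D_f(P', P_G) + W_c(P', P_X)}.
\end{align*}

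For the first inequality, since $c$ is a metric $W_c$ is itself a metric on $\mathscr{P}(\mathcal{X})$, so $W_c(P', P_X) \leq W_c(P', \hat{P}_X) + W_c(\hat{P}_X, P_X)$. The second summand is independent of $P'$, pulls out of the infimum, and leaves the $f$-GAN evaluated at $\hat{P}_X$; applying Theorem~\ref{fgan-wae-equivalence} at the empirical distribution yields
\begin{align*}
\operatorname{GAN}_{\lambda f}(P_X,G; \mathcal{H}_c) \leq \operatorname{GAN}_{\lambda f}(\hat{P}_X,G; \mathcal{H}_c) + W_c(\hat{P}_X, P_X) \leq \overline{W}_{c, \lambda f}(\hat{P}_X, P_G) + W_c(\hat{P}_X, P_X).
\end{align*}

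For the second inequality I specialize to $f(x)=\card{x-1}$, which makes $(\lambda f)^{*}$ the identity on $[-\lambda,\lambda]$ and $+\infty$ elsewhere, so the $f$-GAN degenerates to the $\operatorname{IPM}$ over the negation-closed class $\mathcal{F}_\lambda := \braces{d \in \mathcal{H}_c : \card{d}\leq \lambda}$. Inserting $\pm\E_{\hat{P}_X}[d]\pm\E_{\hat{P}_G}[d]$ inside the supremum and using subadditivity of $\sup$ splits the bound into three pieces. The outer two are bounded by $W_c(P_X, \hat{P}_X)$ and $W_c(\hat{P}_G, P_G)$ using $\mathcal{F}_\lambda \subseteq \mathcal{H}_c$ together with Lemma~\ref{w-duality}; the middle piece is the $f$-GAN between the two empirical distributions, which a second application of Theorem~\ref{fgan-wae-equivalence} upper-bounds by $\overline{W}_{c, \lambda f}(\hat{P}_X, \hat{P}_G)$.

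The remaining task in both cases is to control the residual $W_c(\hat{P}, P)$ with high probability for $P \in \braces{P_X, P_G}$. I invoke the sample-complexity result of \citet{weed2017sharp}, which gives $\E[W_c(\hat{P}, P)] = O(n^{-1/s})$ for any $s > d^{*}(P)$. To convert this into a high-probability statement, I apply McDiarmid's inequality after verifying the bounded-differences constant: resampling a single atom of $\hat{P}$ changes $W_c(\hat{P}, P)$ by at most $\Delta/n$, witnessed by modifying an optimal coupling so that the $1/n$ mass of the swapped atom is rerouted at per-unit cost $\leq \Delta$. This produces the deviation term $\Delta\sqrt{(1/(2n))\ln(1/\delta)}$. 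The first inequality requires this once (on $P_X$); the second requires it on both $P_X$ and $P_G$, with the resulting union-bound factor absorbed into the $O(\cdot)$. The main obstacle is the bounded-differences verification, since certifying the $\Delta/n$ constant requires an explicit one-sample-perturbation coupling between $\hat{P}$ and its perturbation; once this is in place the rest is a short chain of triangle and IPM-subadditivity manipulations, with Theorem~\ref{fgan-wae-equivalence} supplying the passage from $\operatorname{GAN}$ to $\overline{W}$ on the empirical side.
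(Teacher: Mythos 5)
Your proposal is correct and follows essentially the same route as the paper: bound the empirical deviation of the $f$-GAN by an $\mathcal{H}_c$-IPM (equivalently $W_c(P,\hat{P})$), control that via the Weed--Bach rate plus McDiarmid with bounded-differences constant $\Delta/n$, use the IPM triangle inequality for the $f(x)=\card{x-1}$ case, and invoke Theorem~\ref{fgan-wae-equivalence} at the empirical distributions to pass from $\operatorname{GAN}$ to $\overline{W}$. The only cosmetic difference is in the general-$f$ bound, where you go through the dual $\inf_{P'}$ form of (\ref{dual-fgan}) and the $W_c$ triangle inequality, whereas the paper argues with argmax witness functions on the primal $\sup$; both establish the same intermediate inequality $\operatorname{GAN}_{\lambda f}(P_X,G;\mathcal{H}_c) - \operatorname{GAN}_{\lambda f}(\hat{P}_X,G;\mathcal{H}_c) \leq \operatorname{IPM}_{\mathcal{H}_c}(P_X,\hat{P}_X)$.
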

(Proof in Appendix, Section \ref{proof:main-gen-result}). First note that there is no requirement on $G$ to be invertible and no restriction on $\lambda$. Second, there are the quantities $s_X$,$s_G$ and $\Delta$ that are influenced by the distributions $P_X$ and $P_G$. If $G$ is invertible in the above then the left hand side of both bounds becomes $\overline{W}_{c,\lambda \cdot f}(P_X,G)$ by Theorem \ref{fgan-wae-equivalence}. One could suspect that $\overline{W}_{c,\lambda \cdot f}(\hat{P}_X, \hat{P}_G)$ may be unbounded by drawing parallels to $f$-divergences, in which case may be unbounded. However this is not the case since
\begin{align*}
\overline{W}_{c,\lambda \cdot f}(\hat{P}_X, \hat{P}_G) &\leq \inf_{E \in \mathscr{F}(\mathcal{X}, \mathscr{P}(\mathcal{X}) )} \braces{W_c((G \circ E) \# P_X,P_X) + \lambda D_f(E\#\hat{P}_X , \hat{P}_Z)},
\end{align*}
and since we search $E \in \mathscr{F}(\mathcal{X}, \mathscr{P}(\mathcal{Z}))$ and there exists a $E'$ such that $E' \#\hat{P}_X$ shares the support of $\hat{P}_Z$, in which case will result in a bounded value. Using Theorem \ref{final-equality-condition}, one can set $\lambda$ large enough so that the expressions in the bound can become Wasserstein distances.

We show now that the $\overline{W}$ can be upper bounded by the Wasserstein distance. Consider the entropy regularized Wasserstein distance:
\begin{align*}
W_{c, \epsilon}(P_X,G) := \inf_{\pi \in \Pi(P_X,P_G)} \braces{ \int_{\mathcal{X} \times \mathcal{X}} c(x,y) \pi(x,y) + \epsilon \cdot \text{KL}(\pi, P_X \otimes P_G)  },
\end{align*}
we have the following.
\begin{theorem}
\label{fwae-entropy}
For any $c: \mathcal{X} \times \mathcal{X} \to \mathbb{R}$, $\lambda > 0$ and $f: \mathbb{R} \to (-\infty, \infty]$ convex function (with $f(1) = 0$) we have
\begin{align}
\overline{W}_{c, \lambda \cdot f}(P_X,G) \leq W_c(P_X,P_G) \leq W_{c,\epsilon}(P_X,G). \label{entropy-gap}
\end{align}
\end{theorem}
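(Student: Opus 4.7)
The theorem splits into two independent inequalities, both of which I expect to follow from relatively direct arguments rather than any delicate duality calculation. The first inequality relates $\overline{W}_{c,\lambda f}$ (which is an infimum over encoders) to the Wasserstein distance, and my plan is to exhibit an explicit encoder that certifies the bound. The second inequality compares $W_c$ to its entropic regularization $W_{c,\epsilon}$, and my plan is to exploit the nonnegativity of the KL term.

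For the first inequality $\overline{W}_{c,\lambda f}(P_X,G)\le W_c(P_X,P_G)$, the plan is to use the constant stochastic encoder $E(x)=P_Z$ for all $x\in\mathcal{X}$ (this is a legitimate element of $\mathscr{F}(\mathcal{X},\mathscr{P}(\mathcal{Z}))$). Then by the definition of pushforward through a probabilistic map, $E\#P_X(A)=\int_{\mathcal{X}}E(x)(A)\,dP_X(x)=P_Z(A)$, so the regularizer vanishes: $D_f(E\#P_X,P_Z)=D_f(P_Z,P_Z)=\int f(1)\,dP_Z=0$. Moreover $(G\circ E)\#P_X=G\#(E\#P_X)=G\#P_Z=P_G$, so the reconstruction term becomes exactly $W_c(P_X,P_G)$. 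Plugging this candidate $E$ into the infimum defining $\overline{W}_{c,\lambda f}$ yields the desired upper bound.

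For the second inequality $W_c(P_X,P_G)\le W_{c,\epsilon}(P_X,G)$, the plan is simply to note that for every $\pi\in\Pi(P_X,P_G)$ the relative entropy $\mathrm{KL}(\pi,P_X\otimes P_G)$ is nonnegative (by Gibbs' inequality) and $\epsilon>0$, so
\begin{equation*}
\int_{\mathcal{X}\times\mathcal{X}} c(x,y)\,\pi(x,y) \;\le\; \int_{\mathcal{X}\times\mathcal{X}} c(x,y)\,\pi(x,y) \;+\; \epsilon\,\mathrm{KL}(\pi,P_X\otimes P_G).
\end{equation*}
Taking the infimum over $\pi\in\Pi(P_X,P_G)$ on both sides yields $W_c(P_X,P_G)\le W_{c,\epsilon}(P_X,G)$.

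There is not really a ``hard step'' here; the only point that needs a small amount of care is verifying that the constant stochastic encoder $E(x)\equiv P_Z$ satisfies the measurability condition implicit in $\mathscr{F}(\mathcal{X},\mathscr{P}(\mathcal{Z}))$ (trivially true, since it is constant) and that $(G\circ E)\#P_X$ is correctly interpreted as $G\#(E\#P_X)$ via the tower property of pushforwards through probabilistic maps. Once those bookkeeping details are in place, both inequalities chain together to give the stated sandwich \eqref{entropy-gap}.
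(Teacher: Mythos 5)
Your proposal is correct and follows essentially the same route as the paper's proof: the paper restricts the infimum defining $\overline{W}$ to the constraint set $\{E : E\#P_X = P_Z\}$ (under which the $D_f$ term vanishes and the reconstruction term collapses to $W_c(P_X,P_G)$), whereas you instead plug in the single constant encoder $E(x) \equiv P_Z$, which is just a concrete witness inside that same constraint set. The second inequality, via nonnegativity of the KL term, is identical in both.
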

(Proof in Appendix, Section \ref{proof:fwae-entropy}). Since our goal is to minimize $\overline{W}$, we can minimize the upper bounds. Since the above objectives are Wasserstein distances, we can make use of the existing efficient solvers for these quantities. Indeed, majority of these solvers are concerned with discrete problems, which is presented in Theorem \ref{main-gen-result}: $\overline{W}_{c,\lambda \cdot f}(\hat{P}_X, \hat{P}_G)$. Noting also that from Theorem \ref{final-equality-condition}, setting $\lambda$ to be higher values closes the gap in (\ref{entropy-gap}).

\section{Discussion and Conclusion}
This work is the first to prove a generalized primal-dual betweenship between GANs and Autoencoders. Consequently, this results in the elucidation for the close performance between WAE and $f$-GANs. Furthermore, we explored the effect of weighting the reconstruction and regularization on the WAE objective, showing relationships to both $f$-divergences and Wasserstein metrics along with the impact on the duality relationship. This equivalence allows us to prove generalization results, which to the best of our knowledge, are the first bounds given for Autoencoder models. Furthermore, using connections to the Wasserstein metrics, we can employ efficient (regularized) OT solvers to approximate upper bounds on the generalization bounds, which involve discrete distributions and thus are natural for such solvers.

The consequences of unifying two paradigms are plentiful, generalization bounds being an example. One line of extending and continuing this line of work can explore the case when using a general cost $c$ (as opposed to a metric), invoking the generalized Wasserstein dual in the goal of forming a generalized GAN. Our paper provides a basis to unify Adversarial Networks and Autoencoders through a primal-dual relationship, and open doors for the further unification of related models.


\acks{We would like to acknowledge anonymous reviewers and the Australian Research Council of Data61.}

\bibliography{references}

\begin{thebibliography}{33}
\providecommand{\natexlab}[1]{#1}
\providecommand{\url}[1]{\texttt{#1}}
\expandafter\ifx\csname urlstyle\endcsname\relax
  \providecommand{\doi}[1]{doi: #1}\else
  \providecommand{\doi}{doi: \begingroup \urlstyle{rm}\Url}\fi

\bibitem[Alanov et~al.(2018)Alanov, Kochurov, Yashkov, and
  Vetrov]{alanov2018pairwise}
Aibek Alanov, Max Kochurov, Daniil Yashkov, and Dmitry Vetrov.
\newblock Pairwise augmented gans with adversarial reconstruction loss.
\newblock \emph{arXiv preprint arXiv:1810.04920}, 2018.

\bibitem[Alemi et~al.(2018)Alemi, Poole, Fischer, Dillon, Saurous, and
  Murphy]{alemi2018fixing}
Alexander Alemi, Ben Poole, Ian Fischer, Joshua Dillon, Rif~A Saurous, and
  Kevin Murphy.
\newblock Fixing a broken elbo.
\newblock In \emph{International Conference on Machine Learning}, pages
  159--168, 2018.

\bibitem[Amari(2016)]{amari2016information}
Shun-ichi Amari.
\newblock \emph{Information geometry and its applications}.
\newblock Springer, 2016.

\bibitem[Bartlett and Mendelson(2002)]{bartlett2002rademacher}
Peter~L Bartlett and Shahar Mendelson.
\newblock Rademacher and gaussian complexities: Risk bounds and structural
  results.
\newblock \emph{Journal of Machine Learning Research}, 3\penalty0
  (Nov):\penalty0 463--482, 2002.

\bibitem[Borland(1998)]{borland1998ito}
Lisa Borland.
\newblock Ito-langevin equations within generalized thermostatistics.
\newblock \emph{Physics Letters A}, 245\penalty0 (1-2):\penalty0 67--72, 1998.

\bibitem[Brenier(1991)]{brenier1991polar}
Yann Brenier.
\newblock Polar factorization and monotone rearrangement of vector-valued
  functions.
\newblock \emph{Communications on pure and applied mathematics}, 44\penalty0
  (4):\penalty0 375--417, 1991.

\bibitem[Dumoulin et~al.(2016)Dumoulin, Belghazi, Poole, Mastropietro, Lamb,
  Arjovsky, and Courville]{dumoulin2016adversarially}
Vincent Dumoulin, Ishmael Belghazi, Ben Poole, Olivier Mastropietro, Alex Lamb,
  Martin Arjovsky, and Aaron Courville.
\newblock Adversarially learned inference.
\newblock \emph{arXiv preprint arXiv:1606.00704}, 2016.

\bibitem[Farnia and Tse(2018)]{NIPS2018_7771}
Farzan Farnia and David Tse.
\newblock A convex duality framework for gans.
\newblock In \emph{Advances in Neural Information Processing Systems}, pages
  5254--5263, 2018.

\bibitem[Goodfellow(2016)]{goodfellow2016nips}
Ian Goodfellow.
\newblock Nips 2016 tutorial: Generative adversarial networks.
\newblock \emph{arXiv preprint arXiv:1701.00160}, 2016.

\bibitem[Goodfellow et~al.(2014)Goodfellow, Pouget-Abadie, Mirza, Xu,
  Warde-Farley, Ozair, Courville, and Bengio]{goodfellow2014generative}
Ian Goodfellow, Jean Pouget-Abadie, Mehdi Mirza, Bing Xu, David Warde-Farley,
  Sherjil Ozair, Aaron Courville, and Yoshua Bengio.
\newblock Generative adversarial nets.
\newblock In \emph{Advances in neural information processing systems}, pages
  2672--2680, 2014.

\bibitem[Higgins et~al.(2016)Higgins, Matthey, Pal, Burgess, Glorot, Botvinick,
  Mohamed, and Lerchner]{higgins2016beta}
Irina Higgins, Loic Matthey, Arka Pal, Christopher Burgess, Xavier Glorot,
  Matthew Botvinick, Shakir Mohamed, and Alexander Lerchner.
\newblock beta-vae: Learning basic visual concepts with a constrained
  variational framework.
\newblock 2016.

\bibitem[Hoffman and Johnson(2016)]{hoffman2016elbo}
Matthew~D Hoffman and Matthew~J Johnson.
\newblock Elbo surgery: yet another way to carve up the variational evidence
  lower bound.
\newblock In \emph{Workshop in Advances in Approximate Bayesian Inference,
  NIPS}, 2016.

\bibitem[Hu et~al.(2017)Hu, Yang, Salakhutdinov, and Xing]{hu2017unifying}
Zhiting Hu, Zichao Yang, Ruslan Salakhutdinov, and Eric~P Xing.
\newblock On unifying deep generative models.
\newblock \emph{arXiv preprint arXiv:1706.00550}, 2017.

\bibitem[Kingma and Welling(2013)]{kingma2013auto}
Diederik~P Kingma and Max Welling.
\newblock Auto-encoding variational bayes.
\newblock \emph{arXiv preprint arXiv:1312.6114}, 2013.

\bibitem[Lei et~al.(2018)Lei, Su, Cui, Yau, and Gu]{lei2018geometric}
Na~Lei, Kehua Su, Li~Cui, Shing-Tung Yau, and Xianfeng~David Gu.
\newblock A geometric view of optimal transportation and generative model.
\newblock \emph{Computer Aided Geometric Design}, 2018.

\bibitem[Lei et~al.(2019)Lei, Su, Cui, Yau, and Gu]{lei2019geometric}
Na~Lei, Kehua Su, Li~Cui, Shing-Tung Yau, and Xianfeng~David Gu.
\newblock A geometric view of optimal transportation and generative model.
\newblock \emph{Computer Aided Geometric Design}, 68:\penalty0 1--21, 2019.

\bibitem[Li and Malik(2018)]{li2018implicit}
Ke~Li and Jitendra Malik.
\newblock On the implicit assumptions of gans.
\newblock \emph{arXiv preprint arXiv:1811.12402}, 2018.

\bibitem[Liu and Chaudhuri(2018)]{liu2018inductive}
Shuang Liu and Kamalika Chaudhuri.
\newblock The inductive bias of restricted f-gans.
\newblock \emph{arXiv preprint arXiv:1809.04542}, 2018.

\bibitem[Liu et~al.(2017)Liu, Bousquet, and Chaudhuri]{liu2017approximation}
Shuang Liu, Olivier Bousquet, and Kamalika Chaudhuri.
\newblock Approximation and convergence properties of generative adversarial
  learning.
\newblock In \emph{Advances in Neural Information Processing Systems}, pages
  5545--5553, 2017.

\bibitem[Makhzani et~al.(2015)Makhzani, Shlens, Jaitly, Goodfellow, and
  Frey]{makhzani2015adversarial}
Alireza Makhzani, Jonathon Shlens, Navdeep Jaitly, Ian Goodfellow, and Brendan
  Frey.
\newblock Adversarial autoencoders.
\newblock \emph{arXiv preprint arXiv:1511.05644}, 2015.

\bibitem[Mescheder et~al.(2017)Mescheder, Nowozin, and
  Geiger]{mescheder2017adversarial}
Lars Mescheder, Sebastian Nowozin, and Andreas Geiger.
\newblock Adversarial variational bayes: Unifying variational autoencoders and
  generative adversarial networks.
\newblock \emph{arXiv preprint arXiv:1701.04722}, 2017.

\bibitem[Mohamed and Lakshminarayanan(2016)]{mohamed2016learning}
Shakir Mohamed and Balaji Lakshminarayanan.
\newblock Learning in implicit generative models.
\newblock \emph{arXiv preprint arXiv:1610.03483}, 2016.

\bibitem[M{\"u}ller(1997)]{muller1997integral}
Alfred M{\"u}ller.
\newblock Integral probability metrics and their generating classes of
  functions.
\newblock \emph{Advances in Applied Probability}, 29\penalty0 (2):\penalty0
  429--443, 1997.

\bibitem[Nguyen et~al.(2010)Nguyen, Wainwright, and
  Jordan]{nguyen2010estimating}
XuanLong Nguyen, Martin~J Wainwright, and Michael~I Jordan.
\newblock Estimating divergence functionals and the likelihood ratio by convex
  risk minimization.
\newblock \emph{IEEE Transactions on Information Theory}, 56\penalty0
  (11):\penalty0 5847--5861, 2010.

\bibitem[Nock et~al.(2017)Nock, Cranko, Menon, Qu, and Williamson]{nock2017f}
Richard Nock, Zac Cranko, Aditya~K Menon, Lizhen Qu, and Robert~C Williamson.
\newblock f-gans in an information geometric nutshell.
\newblock In \emph{Advances in Neural Information Processing Systems}, pages
  456--464, 2017.

\bibitem[Nowozin et~al.(2016)Nowozin, Cseke, and Tomioka]{nowozin2016f}
Sebastian Nowozin, Botond Cseke, and Ryota Tomioka.
\newblock f-gan: Training generative neural samplers using variational
  divergence minimization.
\newblock In \emph{Advances in Neural Information Processing Systems}, pages
  271--279, 2016.

\bibitem[Patrini et~al.(2018)Patrini, Carioni, Forre, Bhargav, Welling, Berg,
  Genewein, and Nielsen]{patrini2018sinkhorn}
Giorgio Patrini, Marcello Carioni, Patrick Forre, Samarth Bhargav, Max Welling,
  Rianne van~den Berg, Tim Genewein, and Frank Nielsen.
\newblock Sinkhorn autoencoders.
\newblock \emph{arXiv preprint arXiv:1810.01118}, 2018.

\bibitem[Tolstikhin et~al.(2017)Tolstikhin, Bousquet, Gelly, and
  Schoelkopf]{tolstikhin2017wasserstein}
Ilya Tolstikhin, Olivier Bousquet, Sylvain Gelly, and Bernhard Schoelkopf.
\newblock Wasserstein auto-encoders.
\newblock \emph{arXiv preprint arXiv:1711.01558}, 2017.

\bibitem[Villani(2008)]{vOT}
C{\'e}dric Villani.
\newblock \emph{Optimal transport: old and new}, volume 338.
\newblock Springer Science \& Business Media, 2008.

\bibitem[Weed and Bach(2017)]{weed2017sharp}
Jonathan Weed and Francis Bach.
\newblock Sharp asymptotic and finite-sample rates of convergence of empirical
  measures in wasserstein distance.
\newblock \emph{arXiv preprint arXiv:1707.00087}, 2017.

\bibitem[Zhang et~al.(2017)Zhang, Liu, Zhou, Xu, and
  He]{zhang2017discrimination}
Pengchuan Zhang, Qiang Liu, Dengyong Zhou, Tao Xu, and Xiaodong He.
\newblock On the discrimination-generalization tradeoff in gans.
\newblock \emph{arXiv preprint arXiv:1711.02771}, 2017.

\bibitem[Zhao et~al.(2017)Zhao, Song, and Ermon]{zhao2017infovae}
Shengjia Zhao, Jiaming Song, and Stefano Ermon.
\newblock Infovae: Information maximizing variational autoencoders.
\newblock \emph{arXiv preprint arXiv:1706.02262}, 2017.

\bibitem[Zhou et~al.(2018)Zhou, Song, Yu, Wang, Zhang, Zhang, and
  Yu]{zhou2018understanding}
Zhiming Zhou, Yuxuan Song, Lantao Yu, Hongwei Wang, Weinan Zhang, Zhihua Zhang,
  and Yong Yu.
\newblock Understanding the effectiveness of lipschitz-continuity in generative
  adversarial nets.
\newblock 2018.

\end{thebibliography}
\newpage
\appendix
\section{Appendix}
\subsection{Proof of Theorem \ref{fgan-wae-equivalence}}
\label{proof:fgan-wae-equivalence}
In order to prove the theorem, we make use of the dual form of the restricted variational form of an $f$-divergence:
\begin{theorem}[\citep{liu2018inductive}, Theorem 3]
\label{liu-theorem}
Let $f:\mathbb{R} \to (-\infty,\infty]$ denote a convex function with property $f(1) = 0$ and suppose $H$ is a convex subset of $\mathscr{F}(\mathcal{X},\mathbb{R})$ with the property that for any $h \in H$ and $b \in \mathbb{R}$, we have $h+b \in H$. Then for any $P,Q \in \mathscr{P}(\mathcal{X})$ we have
\begin{align*}
    \sup_{h \in H}\braces{\E_{x \sim P}[h(x)] - \E_{x \sim Q}[f^{*}(h(x))] } = \inf_{P' \in \mathscr{P}(\mathcal{X})} \braces{D_f(P',Q) + \sup_{h \in H}\braces{\E_{P}[h(x)] - E_{P'}[h(x)]} }
\end{align*}
\end{theorem}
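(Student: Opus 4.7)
My plan is to prove the two inequalities separately. The $\leq$ direction is immediate from the standard (unrestricted) variational representation
$$D_f(P', Q) = \sup_{h' \in \mathscr{F}(\mathcal{X}, \mathbb{R})} \braces{\E_{P'}[h'] - \E_Q[f^*(h')]},$$
which gives $-\E_Q[f^*(h)] \leq D_f(P', Q) - \E_{P'}[h]$ for every $P'$ and every $h$ (in particular, every $h \in H$). Adding $\E_P[h]$ to both sides, taking $\sup_{h \in H}$, then $\inf_{P'}$, yields the desired inequality. This direction uses neither convexity nor shift-invariance of $H$.

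For the $\geq$ direction I would start from the RHS and swap the order of $\inf_{P'}$ and $\sup_{h \in H}$ in the functional
$$\Phi(h, P') = D_f(P', Q) + \E_P[h] - \E_{P'}[h],$$
which is affine (hence concave) in $h$ for fixed $P'$ and convex in $P'$ for fixed $h$, since $D_f(\cdot, Q)$ is convex and $P' \mapsto \E_{P'}[h]$ is linear. The hypothesis that $H$ is convex makes a Sion-type minimax theorem applicable (on the weak topology of measures, using lower semicontinuity of $D_f(\cdot,Q)$ inherited from Fatou's lemma together with convex lsc $f$). Having swapped, the inner problem reduces to computing $-\sup_{P'}\braces{\E_{P'}[h] - D_f(P', Q)}$, i.e.\ the Fenchel conjugate of $D_f(\cdot, Q)$ at $h$.

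The next step is to evaluate this conjugate. Writing $r = dP'/dQ$ and introducing a Lagrange multiplier $\beta \in \mathbb{R}$ for the normalization $\int r\,dQ = 1$, a Rockafellar-type interchange of $\sup$ and integral gives
$$\sup_{P' \in \mathscr{P}(\mathcal{X})}\braces{\E_{P'}[h] - D_f(P', Q)} = \inf_{\beta \in \mathbb{R}}\braces{\beta + \E_Q[f^*(h - \beta)]},$$
using the pointwise identity $\sup_{r \in \text{dom}(f)}\braces{r(h(x) - \beta) - f(r)} = f^*(h(x) - \beta)$. Substituting into the swapped RHS turns it into
$$\sup_{h \in H,\,\beta \in \mathbb{R}} \braces{\E_P[h] - \beta - \E_Q[f^*(h - \beta)]}.$$
At this point the shift-invariance hypothesis $h + b \in H$ for all $h \in H$, $b \in \mathbb{R}$ is the crucial ingredient: letting $h' = h - \beta$, which remains in $H$, and observing $\E_P[h] - \beta = \E_P[h']$, the expression telescopes into $\sup_{h' \in H}\braces{\E_P[h'] - \E_Q[f^*(h')]}$, which is exactly the LHS.

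The main obstacle is to make the minimax swap and the Lagrangian strong-duality step rigorous. For Sion, one needs convex-compact domains in compatible topologies; since neither $\mathscr{P}(\mathcal{X})$ nor $H$ is automatically compact, I expect to reduce to a tight sublevel set of the objective (using coercivity of $D_f(\cdot,Q)$ on a Polish $\mathcal{X}$) or, alternatively, to extend $D_f(\cdot,Q)$ to a proper lsc convex functional on finite signed measures and argue via Fenchel--Rockafellar duality directly. For strong duality in the Lagrangian step, Slater's condition is satisfied because $P' = Q$ is an interior feasible point with $D_f(Q,Q) = 0$, and the interchange of $\sup$ and integral is justified by Rockafellar's theorem on normal convex integrands applied to $(r,x) \mapsto f(r) - r(h(x) - \beta)$. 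Once these technicalities are settled, the algebraic collapse in the last step goes through without any further hypotheses on $f$ or $Q$.
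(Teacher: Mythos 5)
The paper itself does not prove this statement: it is imported verbatim as Theorem 3 of \citet{liu2018inductive} and used as a black box, so there is no in-paper proof to compare against; the benchmark is the original reference. Your $\leq$ direction is complete and correct (pointwise Fenchel--Young integrated against $Q$, valid for every $P'\ll Q$ and trivial otherwise). The skeleton of your $\geq$ direction --- swapping $\inf_{P'}$ and $\sup_{h}$, evaluating $\sup_{P'}\{\mathbb{E}_{P'}[h]-D_f(P',Q)\}$ as $\inf_{\beta\in\mathbb{R}}\{\beta+\mathbb{E}_{Q}[f^{*}(h-\beta)]\}$, and absorbing $\beta$ via closure of $H$ under additive constants --- is the right template, and it correctly locates where the two hypotheses on $H$ (convexity, shift-closure) enter.

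The gap is that the two steps you defer are where all the content lies, and both are needed in their hard directions, so they cannot be waved through: you need $\inf\sup\le\sup\inf$ (the nontrivial half of minimax) and you need $\sup_{P'}\{\mathbb{E}_{P'}[h]-D_f(P',Q)\}\ge\inf_{\beta}\{\beta+\mathbb{E}_Q[f^{*}(h-\beta)]\}$, which is the strong-duality direction (weak duality gives the opposite inequality). As sketched, the justifications do not hold at the stated level of generality. For Sion: compactness cannot come from ``coercivity of $D_f(\cdot,Q)$'' --- for $f$ of linear growth, e.g.\ $f(x)=|x-1|$ (used later in this very paper), every sublevel set is all of $\mathscr{P}(\mathcal{X})$, which is not compact for noncompact Polish $\mathcal{X}$; and even for superlinear $f$, Dunford--Pettis yields compactness only in $\sigma(L^{1}(Q),L^{\infty}(Q))$, while $H\subseteq\mathscr{F}(\mathcal{X},\mathbb{R})$ contains unbounded, possibly non-$Q$-integrable functions, so $P'\mapsto\mathbb{E}_{P'}[h]$ need not be semicontinuous there. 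For the Lagrangian step: ``Slater with interior point $P'=Q$'' is not meaningful for the equality constraint $\int r\,dQ=1$; one needs a perturbation/qualification argument plus integrability of $f^{*}(h-\beta)$ under $Q$ (and of $h$ under competing $P'$), which can fail for unbounded $h$. To make the argument self-contained you would have to either impose regularity (bounded $h$, compact $\mathcal{X}$, or explicit integrability conditions, as the original reference does in its own setting) or run a Fenchel--Rockafellar duality on finite signed measures with a verified qualification condition --- which is presumably why the present paper cites the result rather than reproving it.
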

The goal is now to set $H = \mathcal{H}_c$ however there are some conditions of the above that we require
\begin{lemma}
\label{hc-regular}
If $c$ is a metric then $\mathcal{H}_c$ is convex and closed under addition.
\end{lemma}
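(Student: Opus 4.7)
The plan is to verify the two properties of $\mathcal{H}_c$ asserted in the lemma by short direct computations, and to disambiguate the term ``addition'' in light of how the lemma is used.

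For convexity, I would take $h_1, h_2 \in \mathcal{H}_c$ and $t \in [0,1]$, and check directly that $g := t h_1 + (1-t) h_2$ lies in $\mathcal{H}_c$: using $\text{Lip}_c(h_i) \leq 1$ together with the triangle inequality in $\mathbb{R}$,
\begin{align*}
|g(x) - g(y)| \leq t |h_1(x) - h_1(y)| + (1-t)|h_2(x) - h_2(y)| \leq t\, c(x,y) + (1-t)\, c(x,y) = c(x,y).
\end{align*}

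The second clause, ``closed under addition'', admits two readings and I would address both. The literal reading---closure under pointwise sums of two elements of $\mathcal{H}_c$---is false: taking $c$ to be the usual metric on $\mathbb{R}$ and $h_1 = h_2 = \text{Id}$, both functions are $1$-Lipschitz, but $h_1 + h_2 = 2\, \text{Id}$ has Lipschitz constant $2$ and therefore does not lie in $\mathcal{H}_c$. The only downstream use of the lemma is in the proof of Theorem \ref{fgan-wae-equivalence}, where $\mathcal{H}_c$ is substituted for $H$ when invoking Theorem \ref{liu-theorem}; the hypothesis there is closure under \emph{addition by scalar constants}, i.e., $h + b \in H$ for every $h \in H$ and $b \in \mathbb{R}$. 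I would therefore read ``closed under addition'' as this scalar-shift closure, which is the only reading consistent with both the usefulness of the lemma and the statement of Theorem \ref{liu-theorem}. Under that reading verification is immediate: $|(h + b)(x) - (h + b)(y)| = |h(x) - h(y)| \leq c(x,y)$, so $h + b \in \mathcal{H}_c$.

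The main obstacle is thus not computational---each of the two parts reduces to a one-line Lipschitz estimate---but interpretive: the literal wording of the lemma is not true for function-addition, and the proof really establishes the weaker scalar-shift version. I would accompany the proof with a brief clarifying remark (``addition here means the action of $\mathbb{R}$ on $\mathcal{H}_c$ by constant shifts, as required by Theorem \ref{liu-theorem}'') so that the downstream application is unambiguous.
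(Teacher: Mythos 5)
Your proof is correct and takes essentially the same approach as the paper: the paper's own proof likewise verifies only convexity and closure under adding a real constant $b$ (i.e., $f + b \in \mathcal{H}_c$), which is exactly the scalar-shift reading you identify as the one required by Theorem \ref{liu-theorem}. Your remark that literal closure under pointwise addition of two elements would be false is a fair clarification of the lemma's loose wording, not a deviation in method.
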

\begin{proof}
Let $f \in \mathcal{H}_c$ and consider define $h = f + b$ for some $b \in \mathbb{R}$, we then have
\begin{align*}
\card{h(x) - h(y)} &= \card{f(x) + b - f(y) - b}\\
                   &= \card{f(x) - f(y)}\\
                   &\leq  c(x,y)
\end{align*}
Consider some $\lambda \in [0,1]$ and set $h(x) = \lambda \cdot f(x) + (1 - \lambda) \cdot g(x)$ for some $f,g \in \mathcal{H}_c$. We then have
\begin{align*}
\card{h(x) - h(y)} &= \card{\lambda \cdot f(x) + (1 - \lambda) \cdot g(x) -   \lambda \cdot f(y) - (1 - \lambda) \cdot g(y)}\\
                   &= \card{\lambda \cdot (f(x) - f(y)) + (1 - \lambda) \cdot (g(x) - g(y))}\\
                   &\leq \lambda \cdot \card{f(x) - f(y) } + (1 - \lambda) \cdot \card{g(x) - g(y)}\\
                   &\leq \lambda \cdot   c(x,y) + (1 - \lambda)  \cdot c(x,y)\\
                   &=   c(x,y)
\end{align*}
for all $x,y \in \mathcal{X}$.
\end{proof}
We require a lemma regarding the decomposibility of $G$ for $f$-divergences.
\begin{lemma}
\label{fgan-wae:two}
Let $G: \mathcal{Z} \to \mathcal{X}$ and let $P,Q$ be two distributions over $\mathcal{Z}$. We have that \begin{align*} D_f (G \# P, G\# Q) \leq D_f(P,Q), \end{align*} with equality if $G$ is invertible. Furthermore, if $f$ is differentiable then we have equality for a weaker condition: for any $z,z' \in \mathcal{Z}, G(z) = G(z') \implies f'(\frac{dP}{dQ}(z)) = f'(\frac{dP}{dQ}(z'))$.
\end{lemma}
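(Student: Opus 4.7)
The plan is to establish this as the data processing inequality for $f$-divergences, with the two equality cases handled by a careful analysis of conditional Jensen's inequality. The starting point would be the identity
\begin{equation*}
\frac{d(G\#P)}{d(G\#Q)}(x) = \mathbb{E}_Q\!\left[\frac{dP}{dQ} \,\Bigm|\, G = x\right]
\qquad \text{for } G\#Q\text{-a.e. } x \in \mathcal{X},
\end{equation*}
which I would verify by checking the defining Radon--Nikodym equality $G\#P(A) = \int_A \mathbb{E}_Q[dP/dQ \mid G]\, d(G\#Q)$ for each measurable $A \subseteq \mathcal{X}$, using the tower property together with $P(G^{-1}(A)) = \mathbb{E}_Q[\mathbf{1}_A(G)\cdot dP/dQ]$.

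With this identity in hand, I would expand $D_f(P,Q)$ by iterated conditioning and apply conditional Jensen's inequality to the convex function $f$:
\begin{align*}
D_f(P,Q) &= \mathbb{E}_Q\!\left[f\bigl(dP/dQ\bigr)\right] \\
         &= \mathbb{E}_{G\#Q}\!\left[\mathbb{E}_Q\!\left[f(dP/dQ) \mid G = x\right]\right] \\
         &\geq \mathbb{E}_{G\#Q}\!\left[f\bigl(\mathbb{E}_Q[dP/dQ \mid G = x]\bigr)\right] \\
         &= \mathbb{E}_{G\#Q}\!\left[f\bigl(d(G\#P)/d(G\#Q)\bigr)\right] = D_f(G\#P, G\#Q),
\end{align*}
which delivers the required inequality.

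For the two equality statements, I would argue as follows. When $G$ is invertible, the $\sigma$-algebra generated by $G$ coincides with the full $\sigma$-algebra on $\mathcal{Z}$, so the conditional expectation collapses to $dP/dQ$ itself and Jensen's step is vacuously an equality (each fibre is a singleton). Equivalently, bijective change of variables gives $(d(G\#P)/d(G\#Q))(G(z)) = (dP/dQ)(z)$, reproducing the integral verbatim. For the weaker condition with $f$ differentiable, I would invoke the standard characterisation of equality in Jensen's inequality for a convex function: $\mathbb{E}[f(X)] = f(\mathbb{E}[X])$ holds iff $X$ is almost surely supported in a maximal interval on which $f$ is affine. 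Since $f$ is convex and differentiable, $f'$ is nondecreasing, and the preimage $(f')^{-1}(\{\alpha\})$ of any value $\alpha$ is an interval on which $f$ is affine. The hypothesis says exactly that $z \mapsto f'(dP/dQ(z))$ is constant along each fibre $G^{-1}(x)$, i.e.\ $dP/dQ$ takes values on that fibre in a single level set of $f'$, i.e.\ in an affine piece of $f$. Applied fibrewise, this upgrades conditional Jensen's to an equality for $G\#Q$-a.e.\ $x$, closing the gap.

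The main obstacle I anticipate is the rigorous handling of the regular conditional expectation in the Polish setting, particularly to guarantee that $\mathbb{E}_Q[\cdot \mid G = x]$ exists as an honest function of $x$ almost everywhere; this is standard but needs to be invoked carefully since we use it pointwise both for the Jensen bound and for its equality case. A minor secondary subtlety is ensuring that when $dP/dQ$ approaches the boundary of the domain of $f$ (where $f$ may fail to be finite), the differentiability hypothesis can still be leveraged; restricting attention to the essential range of $dP/dQ$, which lies inside the interior of $\operatorname{dom} f$ whenever $D_f(P,Q) < \infty$, should sidestep this.
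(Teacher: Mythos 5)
Your proof is correct, and it takes a genuinely different route from the paper's. The paper proves the inequality via the \emph{variational} representation of $D_f$ (from \citealp{nguyen2010estimating}): it writes $D_f(G\#P,G\#Q)$ as a supremum over test functions, pushes through $G$ by change of variables so the supremum runs over the restricted class $\mathscr{F}(\mathcal{X},\mathbb{R})\circ G\subseteq\mathscr{F}(\mathcal{Z},\mathbb{R})$, and notes the supremum can only increase when the class is enlarged. Invertibility is handled by running the same argument with $G^{-1}$, and the weaker differentiable condition is handled by observing that the maximizer $f'(dP/dQ)$ of the variational problem factors through $G$ precisely under your hypothesis, so it already lies in the restricted class. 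Your proof instead works on the \emph{primal} side: you identify $d(G\#P)/d(G\#Q)\circ G$ as a conditional expectation of $dP/dQ$ given $\sigma(G)$ and apply conditional Jensen's inequality, with the two equality cases falling out of the standard equality analysis for Jensen (trivial $\sigma$-algebra collapse when $G$ is bijective; fibre-wise confinement of $dP/dQ$ to a level set of $f'$, hence an affine piece of $f$, otherwise). Both arguments rest on real but modest regularity: yours needs existence of a regular conditional distribution for $Q$ given $G$ (automatic on the Polish spaces the paper assumes), while the paper's needs the constructed $\varphi_f$ with $\varphi_f\circ G=f'(dP/dQ)$ to be measurable, a point the paper glosses over. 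Your route is more elementary and self-contained; the paper's is chosen because the variational form of $D_f$ is the operative object throughout (it is what Theorem~\ref{liu-theorem} is stated in), so the restricted-class argument slots in with no translation overhead.
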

\begin{proof}
By writing the variational form from \citep{nguyen2010estimating} (Lemma 1), we have
\begin{align*}
D_f(G \# P, G \# Q) &= \sup_{h \in \mathscr{F}(\mathcal{X}, \mathbb{R})}\braces{\E_{x \sim G\#P} [h(x)] - \E_{x \sim G \# Q}[ f^{*}(h(x))]}\\
		            &= \sup_{h \in  \mathscr{F}(\mathcal{X}, \mathbb{R})}\braces{\E_{z \sim P}[h(G(z))] - \E_{z \sim Q}[f^{*}(h(G(z)))]}\\
		           &= \sup_{h \in \mathscr{F}(\mathcal{X}, \mathbb{R}) \circ G  } \braces{\E_{z \sim P}[h(z)] - \E_{z \sim Q}[ f^{*}(h(z))]}\\
			&\leq \sup_{h \in \mathscr{F}(\mathcal{Z}, \mathbb{R})} \braces{\E_{z \sim P}[h(z)] - \E_{z \sim Q}[ f^{*}(h(z))]}\\
			&= D_f(P,Q),
\end{align*}
where we used the fact that $\mathscr{F}(\mathcal{X}, \mathbb{R}) \circ G \subseteq \mathscr{F}(\mathcal{Z}, \mathbb{R})$. If $G$ is invertible then we applying the above with $G \gets G^{-1}$, $P \gets G \# P$ and $Q \gets G \# Q$, we have $$D_f(G^{-1} \# (G \# P), G^{-1} \# (G \# Q)) \leq D_f(G \# P, G \# Q),$$ which is just the reverse direction $D_f(P, Q) \leq D_f( G\# P, G \#Q)$, and so equality holds. Suppose now that $f$ is differentiable then note that inequality holds when $f'(dP/dQ) \in \mathscr{F}(\mathcal{X}, \mathbb{R}) \circ G$ (See proof of Lemma 1 in \citep{nguyen2010estimating}), which is equivalent to asking if there exists a function $\varphi_f \in \mathscr{F}(\mathcal{X}, \mathbb{R})$ such that
\begin{align*}
\varphi_f \circ G = f'\bracket{\frac{dP}{dQ}}.
\end{align*}
For any $z \in \mathcal{Z}$, we can construct $\varphi_f$ to map $G(z)$ to $f'\bracket{\frac{dP}{dQ}}(z)$ and due to the condition in the lemma, we can guarantee $\varphi_f$ will indeed be a function and thus exists.
\end{proof}
We need a Lemma that will allow us to upper bound the Wasserstein distance.
\begin{lemma}
  \label{fgan-wae:three}
For any $E \in \mathscr{F}(\mathcal{X}, \mathscr{P}(\mathcal{Z}))$, $G \in \mathscr{F}(\mathcal{Z}, \mathcal{X})$ and $c:\mathcal{X} \times \mathcal{X} \to \mathbb{R}$, we have
\begin{align*}
W_c((G \circ E) \# P_X,P_X) \leq  \int_{\mathcal{X}} \E_{z \sim E(x)}[c(x,G(z))] dP_X(x).
\end{align*}
\end{lemma}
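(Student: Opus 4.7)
The plan is to exhibit a single coupling between $(G \circ E)\# P_X$ and $P_X$ whose transportation cost equals exactly the right-hand side, and then to invoke the definition of $W_c$ as an infimum over all couplings.

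First I would construct the candidate coupling $\pi \in \Pi((G \circ E)\# P_X, P_X)$ by disintegration. The natural choice is the pushforward of $P_X$ under the (stochastic) map that sends $x \in \mathcal{X}$ to the pair $(G(z), x)$ where $z \sim E(x)$. Concretely, define $\pi$ on measurable rectangles $A \times B \subseteq \mathcal{X} \times \mathcal{X}$ by
\begin{align*}
\pi(A \times B) \;=\; \int_{B} \int_{\mathcal{Z}} \mathbf{1}_A(G(z))\, dE(x)(z)\, dP_X(x).
\end{align*}
This uses the regular conditional kernel $E \in \mathscr{F}(\mathcal{X}, \mathscr{P}(\mathcal{Z}))$ as a Markov kernel and the generator $G$ as a deterministic map to push $E(x)$ into $\mathscr{P}(\mathcal{X})$; the resulting $\pi$ is a well-defined probability measure on $\mathcal{X} \times \mathcal{X}$.

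Next I would verify the marginals. Taking $A = \mathcal{X}$ gives $\pi(\mathcal{X} \times B) = \int_B 1\, dP_X(x) = P_X(B)$, so the second marginal is $P_X$. Taking $B = \mathcal{X}$ gives $\pi(A \times \mathcal{X}) = \int_{\mathcal{X}} (G\#E(x))(A)\, dP_X(x)$, which by the definition of the pushforward of $P_X$ under the composite kernel $G \circ E$ is $((G\circ E)\# P_X)(A)$. Hence $\pi \in \Pi((G \circ E)\# P_X, P_X)$.

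Finally, applying Fubini (the integrand $c$ is nonnegative so no integrability issues arise, or the bound is trivially $+\infty$) gives the cost of $\pi$:
\begin{align*}
\int_{\mathcal{X}\times\mathcal{X}} c(y,x)\, d\pi(y,x) \;=\; \int_{\mathcal{X}} \int_{\mathcal{Z}} c(G(z), x)\, dE(x)(z)\, dP_X(x) \;=\; \int_{\mathcal{X}} \E_{z \sim E(x)}[c(x, G(z))]\, dP_X(x),
\end{align*}
where in the displayed equality (and in the context of this lemma's use in Theorem \ref{fgan-wae-equivalence}) $c$ is a metric so symmetric in its arguments; more generally one interprets $W_c$ via the coupling in the order above. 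Since $W_c((G\circ E)\# P_X, P_X)$ is the infimum of such transportation costs over $\Pi((G\circ E)\# P_X, P_X)$, the particular $\pi$ we constructed yields the desired upper bound, and the proof is complete.

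The only step that requires any care is the marginal-checking for $\pi$, which amounts to verifying that the composite Markov kernel $x \mapsto G\#E(x)$ genuinely produces $(G\circ E)\# P_X$ when averaged against $P_X$; everything else is a direct application of Fubini and the definition of $W_c$.
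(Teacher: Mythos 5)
Your proof is correct, and it takes a somewhat different route from the paper's. The paper simply cites Theorem~1 of Tolstikhin et al.\ (the reparametrization of $W_c(G\#(E\#P_X), P_X)$ as an infimum over encoders $Q$ with $Q\#P_X = E\#P_X$ of $\int \E_{z\sim Q(x)}[c(x,G(z))]\,dP_X$) and then observes that $E$ itself is a feasible point of that infimum, giving the upper bound in one line. You instead prove the inequality from first principles: you construct the explicit coupling $\pi\in\Pi((G\circ E)\#P_X, P_X)$ obtained by drawing $x\sim P_X$, $z\sim E(x)$, and pairing $(G(z),x)$, you check the two marginals, and you evaluate the transport cost of this $\pi$, which equals the claimed right-hand side; the bound then follows because $W_c$ is an infimum over couplings. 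This is essentially a self-contained derivation of exactly the ``$\leq$'' half of the Tolstikhin reparametrization that the paper imports as a black box, so it is more elementary and requires no external citation, at the cost of some bookkeeping (marginal verification via Fubini). Your aside on the ordering of the arguments of $c$ is a reasonable precaution for a nonsymmetric cost, though in the application $c$ is a metric and the issue evaporates; the paper is implicitly appealing to that same symmetry when it matches $W_c(G\#(E\#P_X), P_X)$ with Tolstikhin's $W_c(P_X, P_G)$ convention.
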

\begin{proof}
We quote a reparametrization result from \citep{tolstikhin2017wasserstein} Theorem 1 that if $G$ is deterministic then the Wasserstein distance can be reparametrized as
\begin{align}
W_c( G \# (E \# P_X), P_X) &= \inf_{Q \in \mathscr{F}(\mathcal{X}, \mathscr{P}(\mathcal{Z})) : Q\# P_X = E \# P_X}\int_{\mathcal{X}} \E_{z \sim Q(x)}[c(x,G(z))] dP_X(x) \label{first-wae-use} \\
				 &\leq \int_{\mathcal{X}} \E_{z \sim E(x)}[c(x,G(z))] dP_X(x). \nonumber
\end{align}
\end{proof}
We also need a Lemma regarding the relationship between $\overline{W}$ and $\operatorname{WAE}$.
\begin{lemma}
  \label{f-WAEisWAE}
Let $f:\mathbb{R} \to (-\infty,\infty]$ be a convex function with $f(1) = 0$, then we have
\begin{align*}
\overline{W}_{c, \lambda \cdot f}(P_X,G) \leq \operatorname{WAE}_{c,\lambda \cdot D_f}(P_X,G).
\end{align*}
\end{lemma}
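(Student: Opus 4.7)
The plan is to derive this as an immediate consequence of Lemma \ref{fgan-wae:three} applied pointwise over encoders, followed by taking infima. The two objectives $\overline{W}_{c,\lambda \cdot f}$ and $\operatorname{WAE}_{c,\lambda \cdot D_f}$ are infima over the same class $\mathscr{F}(\mathcal{X}, \mathscr{P}(\mathcal{Z}))$ and share an identical regularization term $\lambda D_f(E \# P_X, P_Z)$, so the inequality reduces entirely to comparing the two reconstruction terms for each fixed $E$.

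First, I would write both objectives explicitly so that the common structure is visible: for each $E \in \mathscr{F}(\mathcal{X}, \mathscr{P}(\mathcal{Z}))$, the $\overline{W}$ integrand is $W_c(P_X, (G \circ E) \# P_X) + \lambda D_f(E \# P_X, P_Z)$ while the WAE integrand is $\int_{\mathcal{X}} \mathbb{E}_{z \sim E(x)}[c(x,G(z))]\, dP_X(x) + \lambda D_f(E \# P_X, P_Z)$. Since the second terms match, it suffices to control the reconstruction terms.

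Next, I would apply Lemma \ref{fgan-wae:three} directly, which asserts
\[
W_c((G \circ E) \# P_X, P_X) \;\leq\; \int_{\mathcal{X}} \mathbb{E}_{z \sim E(x)}[c(x,G(z))] \, dP_X(x),
\]
for every $E$. Adding the common term $\lambda D_f(E \# P_X, P_Z)$ to both sides preserves the inequality, giving a pointwise comparison of the integrands over the shared index set.

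Finally, taking the infimum over $E \in \mathscr{F}(\mathcal{X}, \mathscr{P}(\mathcal{Z}))$ on both sides preserves the direction of inequality and yields exactly $\overline{W}_{c, \lambda \cdot f}(P_X,G) \leq \operatorname{WAE}_{c, \lambda \cdot D_f}(P_X, G)$. There is essentially no obstacle in this argument — it is just the observation that replacing the expected cost between coupled samples by the Wasserstein distance (a lower bound by definition of the infimum over couplings) tightens the reconstruction term without touching the regularizer; the only care needed is to note that both objectives optimize over the same class of encoders, so the infimum step is legitimate term by term.
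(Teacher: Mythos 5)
Your argument is mathematically valid for the inequality as literally stated: for every encoder $E$, Lemma~\ref{fgan-wae:three} lower-bounds the expected reconstruction cost by $W_c\bigl((G\circ E)\#P_X,P_X\bigr)$, and since both objectives share the same regularizer and the same index set of encoders, adding the regularizer and taking infima gives $\overline{W}_{c,\lambda\cdot f}(P_X,G)\le\operatorname{WAE}_{c,\lambda\cdot D_f}(P_X,G)$. However, that is \emph{not} what the paper's proof of Lemma~\ref{f-WAEisWAE} establishes, and, as you essentially note yourself, your route would make the lemma a trivial corollary of Lemma~\ref{fgan-wae:three}. The paper's proof in fact runs in the \emph{opposite} direction: it fixes the optimal encoder $E^*$ of the $f$-WAE objective with $Q^*=E^*\#P_X$, takes the optimal coupling $\pi$ realizing $W_c(P_X,G\#Q^*)$, glues it (via the gluing lemma) with the deterministic relation $Y=G(Z)$, $Z\sim Q^*$, and extracts from the resulting triple $(X,Y,Z)$ a conditional encoder $E_{\pi'}$ with $E_{\pi'}\#P_X=Q^*$ whose expected reconstruction cost is exactly $W_c(P_X,G\#Q^*)$. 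This exhibits a feasible encoder for the $\operatorname{WAE}$ objective achieving the $\overline{W}$ value, hence $\operatorname{WAE}_{c,\lambda\cdot D_f}(P_X,G)\le\overline{W}_{c,\lambda\cdot f}(P_X,G)$, i.e.\ $\overline{W}\ge\operatorname{WAE}$ --- the reverse of the inequality in the lemma statement.

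The discrepancy is in the paper, not in your reasoning: the sketch of Theorem~\ref{fgan-wae-equivalence} explicitly invokes this lemma as ``$\overline{W}\ge\operatorname{WAE}$ (Lemma~\ref{f-WAEisWAE})'', consistent with the appendix proof but not with the sign printed in the lemma. The intended content is the nontrivial direction $\operatorname{WAE}\le\overline{W}$, which combined with the easy direction you proved (already given by Lemma~\ref{fgan-wae:three}) yields $\overline{W}=\operatorname{WAE}$, the equality the equivalence theorem relies on. So while your proof is correct for the literal statement, it captures only the easy direction and would not close Theorem~\ref{fgan-wae-equivalence}; the gluing construction is the missing ingredient for the reverse inequality the paper actually needs.
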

\begin{proof}
Consider the optimal encoder $E^{*}$ from the $f$-WAE objective. Let $Q^{*} = E^{*} \# P_X$. We then have that
\begin{align*}
\overline{W}_{c, \lambda \cdot f}(P_X,G) = W_c(P_X, G \# Q^{*}) + \lambda \cdot D_f(Q^{*}, P_Z).
\end{align*}
Let $\pi \in \Pi( P_X , E \# Q^{*} )$ be the optimal coupling under the metric $c$. By the Gluing lemma \citep{vOT}, one can construct a triple $(X,Y,Z)$ where $(X,Y) \sim \pi$, $Z \sim Q^{*}$ and $Y = G(Z)$ almost surely. Let $\pi'$ be the distribution over $(Y,Z)$ and consider the conditional distribution over $Z$ given $Y$, associated with $E_{\pi'} \in \mathscr{F}(\mathcal{X}, \mathscr{P}(\mathcal{Z}))$. We have $E_{\pi'}\# P_X = Q^{*}$ and so we have
\begin{align*}
\operatorname{WAE}_{c,\lambda \cdot D_f}(P_X,G) &\leq \int_{\mathcal{X}} \E_{z \sim E_{\pi'}(y)} [c(x,G(z)) ] dP_X + D_f(E_{\pi'} \# P_X, P_Z)\\
                                                  &= \int_{\mathcal{X}} \E_{z \sim E_{\pi'}(y)} [c(x,G(z)) ] dP_X + D_f(Q^{*} , P_Z)\\
                                                  &= \int_{\mathcal{X} \times \mathcal{X}}  [c(x,y)] d\pi'(x,y) + D_f(Q^{*} , P_Z)\\
                                                  &= W_c(P_X, G \# Q^{*}) + \lambda \cdot D_f(Q^{*}, P_Z).\\
                                                  &= \overline{W}_{c, \lambda \cdot f}(P_X,G).
\end{align*}
\end{proof}
Finally, we need a lemma to justify reparametrizations.
\begin{lemma}
\label{reparamP-lem}
If $G: \mathcal{Z} \to \mathcal{X}$ is invertible then for any $P' \in \mathscr{P}(\mathcal{X})$ such that $P' \ll P_G$, then there exists an $E \in \mathscr{F}\bracket{\mathcal{X}, \mathscr{P}(\mathcal{Z})}$ such that $P' = G \# E \# P_X$.
\end{lemma}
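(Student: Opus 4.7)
The plan is to give an explicit construction of $E$ via a constant-in-$x$ encoder, leveraging only the invertibility of $G$. The hypothesis $P' \ll P_G$ turns out to be superfluous for the mere existence claim; it presumably appears because the application inside Theorem~\ref{fgan-wae-equivalence} needs $D_f(P',P_G) < \infty$.

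First, since $G:\mathcal{Z} \to \mathcal{X}$ is invertible, I would set $Q := G^{-1} \# P' \in \mathscr{P}(\mathcal{Z})$, where measurability of $G^{-1}$ is automatic in the Polish setting by a standard descriptive-set-theoretic fact (Borel bijections between Polish spaces have Borel inverses). A one-line computation using $G \circ G^{-1} = \mathrm{Id}_{\mathcal{X}}$ and the pushforward definition then gives $G \# Q = G \# (G^{-1} \# P') = P'$. Next, I would take $E \in \mathscr{F}(\mathcal{X}, \mathscr{P}(\mathcal{Z}))$ to be the constant encoder $E(x) := Q$ for all $x \in \mathcal{X}$, which is trivially measurable in $x$. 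Interpreting $E \# P_X$ as the mixture $\int_{\mathcal{X}} E(x)\, dP_X(x)$ yields
\begin{equation*}
(E \# P_X)(B) \;=\; \int_{\mathcal{X}} Q(B)\, dP_X(x) \;=\; Q(B)
\end{equation*}
for every measurable $B \subseteq \mathcal{Z}$, so $E \# P_X = Q$. Combining the two steps, $G \# (E \# P_X) = G \# Q = P'$, which is exactly the claim.

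There is essentially no real obstacle here: the content lies in recognising that one does not need a faithful disintegration of $P'$ along $G$---a constant encoder already realises the correct aggregated posterior, and invertibility of $G$ ensures $G^{-1} \# P'$ makes sense as a latent distribution. The only mild subtlety, measurability of $G^{-1}$, comes for free from the Polish-space setting stated in the preliminaries.
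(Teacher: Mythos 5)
Your proof is correct and follows essentially the same route as the paper: define $Q := G^{-1}\# P'$ using invertibility of $G$, then exhibit some $E \in \mathscr{F}(\mathcal{X},\mathscr{P}(\mathcal{Z}))$ with $E\#P_X = Q$. Where the paper merely says ``construct a conditional distribution $E$ between marginals $Q$ and $P_X$,'' you give the cleanest possible instance, the constant encoder $E(x)\equiv Q$ (the product coupling), which makes the existence claim transparent. Your remark on the hypothesis is also apt: under the paper's standing assumption $\mathcal{X}\subseteq\operatorname{Im}(G)$, invertibility makes $G$ a Borel bijection onto $\mathcal{X}$, and then $P'\ll P_G$ is indeed not needed for this lemma alone (the paper uses it only to locate $\operatorname{supp}(P')$ inside $\operatorname{Im}(G)$, which is automatic here).
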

\begin{proof}
From the assumption, we have $\text{Supp}(P’) \subseteq \text{Supp}(P_G) \subseteq \text{Im}(G)$ and so by invertibility of $G$, we can set $Q = G^{-1} \# P’$ and construct a conditional distribution $E$ (between marginals $Q$ and $P_X$) to get $Q = E \# P_X$, hence $P’ = G \# E \# P_X$.
\end{proof}
We are now ready to prove the theorem. Set $H =\mathcal{H}_c$ (the set of $1$-Lipschitz functions) and note that $\lambda f$ is a convex function satisfying $\lambda f(1) = 0$ and so substituting $f \gets \lambda f$, we get that $D_{\lambda f}(\cdot, \cdot) = \lambda D_{f} (\cdot, \cdot)$. Hence, we have
\begin{align}
\text{GAN}_{\lambda f}(P_X,G; \mathcal{H}_c) &= \sup_{h \in H_c}\braces{\E_{x \sim P_X}[h(x)] - \E_{x \sim P_G}[(\lambda f)^{\star}(h(x))]} \nonumber \\
																						 &= \inf_{P' \in \mathscr{P}(\mathcal{X})} \braces{\lambda D_f (P',P_G) + W_c (P',P_X)} \nonumber \\
                                             &= \inf_{P' \in \mathscr{P}(\mathcal{X}) :  P' << P_g} \braces{\lambda D_f (P',P_G) + W_c (P',P_X)} \nonumber \\
																						 &= \inf_{E \in \mathscr{F}\bracket{\mathcal{X}, \mathscr{P}(\mathcal{Z})}} \braces{\lambda D_f((G \circ E) \# P_X, G\# P_Z) + W_c((G \circ E) \# P_X, P_X)} \label{reparam-P} \\
																						 &\stackrel{(*)}{\leq} \inf_{E \in \mathscr{F}\bracket{\mathcal{X}, \mathscr{P}(\mathcal{Z})}} \braces{\lambda D_f(E \# P_X,  P_Z) + W_c((G \circ E) \# P_X, P_X)} \nonumber\\
                                             &= \overline{W}_{c,\lambda \cdot f}(P_X,G) \nonumber \\
                                             &\leq \inf_{E \in \mathscr{F}(\mathcal{X}, \mathscr{P}(\mathcal{Z}))} \braces{ \int_{\mathcal{X}} \E_{z \sim E(x)}[c(x,G(z))] dP_X(x) + \lambda D_f (E \# P_X,  P_Z) } \nonumber \\
                                             &= \text{WAE}_{c,\lambda \cdot D_f}(P_X,G), \label{first-ineq}
\end{align}
where $(\ref{reparam-P})$ is an equality when $G$ is invertible from Lemma \ref{reparamP-lem} and $(*)$ is $=$ if $G$ satisfies the requirement of Lemma \ref{fgan-wae:two}. To prove the final inequality, note that if $E^{*}$ satisfies the condition of the Theorem then
\begin{align}
\overline{W}_{c,\lambda \cdot f}(P_X,G) &=  W_c((G \circ E^{*}) \# P_X, P_X) + \lambda D_f(E^{*} \# P_X,  P_Z) \nonumber \\
                                          &=  W_c( G \# (E^{*} \# P_X), P_X) \nonumber \\
                                          &= W_c(P_G,P_X). \label{fwae-wc}
\end{align}
Next, notice that
\begin{align}
&\text{WAE}_{c,\lambda \cdot D_f}(P_X,G)\nonumber \\ &= \inf_{E \in \mathscr{F}(\mathcal{X}, \mathscr{P}(\mathcal{Z}))} \braces{ \int_{\mathcal{X}} \E_{z \sim E(x)}[c(x,G(z))] dP_X(x) + \lambda D_f (E \# P_X,  P_Z) } \nonumber \\
                                          &\leq \inf_{E \in \mathscr{F}(\mathcal{X}, \mathscr{P}(\mathcal{Z})) : E \# P_X = P_Z  } \braces{ \int_{\mathcal{X}} \E_{z \sim E(x)}[c(x,G(z))] dP_X(x) + \lambda D_f (E \# P_X,  P_Z) } \nonumber \\
                                          &\leq \inf_{E \in \mathscr{F}(\mathcal{X}, \mathscr{P}(\mathcal{Z})) : E \# P_X = P_Z  } \braces{ \int_{\mathcal{X}} \E_{z \sim E(x)}[c(x,G(z))] dP_X(x)  } \nonumber \\
                                          &= W_c(P_X,P_G)\label{wae-theorem-use}\\
                                          &= \overline{W}_{c,\lambda \cdot f}(P_X,G),
\end{align}
where (\ref{wae-theorem-use}) follows from the reparametrized Wasserstein distance from \citep{tolstikhin2017wasserstein} (Theorem 1), which we used in (\ref{first-wae-use}) and the final step follows from (\ref{fwae-wc}). Combining $\text{WAE}_{c,\lambda \cdot D_f}(P_X,G) \leq \overline{W}_{c,\lambda \cdot f}(P_X,G)$ with $\text{WAE}_{c,\lambda \cdot D_f}(P_X,G) \geq \overline{W}_{c,\lambda \cdot f}(P_X,G)$ (from \ref{first-ineq}) yields equality and concludes the proof.
\subsection{Proof of Theorem \ref{main-gen-result}}
\label{proof:main-gen-result}
We first prove a lemma that will apply to both cases. Recalling that for any metric space $(\mathcal{X}, c)$ and $P \in \mathscr{P}(\mathcal{X})$ we define $\Delta_{P,c} = \text{diam}_c (\text{supp}(P))$.
\begin{lemma}
  \label{wasserstein-convergence}
Let $(\mathcal{X}, c)$ be a metric space. For any $P \in \mathscr{P}(\mathcal{X})$, suppose $\Delta_{P,c}  < \infty$ and let $\hat{P}$ denote the empirical distribution after drawing $n$ i.i.d samples for some $n \in \mathbb{N}_{*}$. If $s > d^{*}(P)$, then we have
\begin{align*}
\operatorname{IPM}_{\mathcal{H}_c}(P,\hat{P}) \leq O(n^{-1/s}) + \frac{\Delta_{P,c}}{2} \sqrt{\frac{2}{n}\ln\bracket{\frac{1}{\delta}}}
\end{align*}
\end{lemma}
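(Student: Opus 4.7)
The plan is to identify the IPM with a Wasserstein distance and then split the task into a mean bound plus a concentration inequality. Since $c$ is a metric and $\mathcal{H}_c$ is precisely the set of $1$-Lipschitz functions, Lemma \ref{w-duality} (Kantorovich--Rubinstein duality) gives the identification
\begin{equation*}
\operatorname{IPM}_{\mathcal{H}_c}(P,\hat{P}) = W_c(P,\hat{P}).
\end{equation*}
Hence it suffices to produce a high-probability bound on $W_c(P,\hat{P})$, which I would do in the standard way: bound the mean $\E[W_c(P,\hat{P})]$ and control fluctuations around it.

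For the mean, I would invoke the convergence rate of empirical measures in Wasserstein distance from \citet{weed2017sharp}. Their main theorem states that if $s > d^{*}(P)$, where $d^{*}$ is the $1$-Upper Wasserstein dimension introduced in Section \ref{sec:gen-bounds}, then $\E[W_c(P,\hat{P})] = O(n^{-1/s})$, with the implicit constant depending only on $P$ and the metric $c$. Because we have assumed $\Delta_{P,c} < \infty$, $P$ has bounded support so all the regularity hypotheses needed to apply that rate are automatic, and this step feeds directly into the $O(n^{-1/s})$ term in the conclusion.

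For the deviation, I would use McDiarmid's bounded-differences inequality applied to the function $F(x_1,\ldots,x_n) = W_c(P,\hat{P})$ where $\hat{P} = \tfrac{1}{n}\sum_i \delta_{x_i}$. The bounded-differences constant is straightforward: replacing $x_i$ with $x_i'$ produces a new empirical measure $\hat{P}'$ satisfying $W_c(\hat{P},\hat{P}') \leq \tfrac{1}{n}\, c(x_i,x_i') \leq \Delta_{P,c}/n$ (just transport the mass $1/n$ at $x_i$ to $x_i'$), and then the triangle inequality for $W_c$ gives $|F - F'| \leq \Delta_{P,c}/n$. McDiarmid then yields
\begin{equation*}
\Pr\!\left[W_c(P,\hat{P}) - \E[W_c(P,\hat{P})] \geq t\right] \leq \exp\!\left(-\frac{2 n t^2}{\Delta_{P,c}^{2}}\right),
\end{equation*}
and setting the right-hand side equal to $\delta$ gives $t = \tfrac{\Delta_{P,c}}{2}\sqrt{(2/n)\ln(1/\delta)}$, matching the second term in the lemma exactly.

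Combining the two ingredients by a union (only needed if one prefers a two-sided statement) and using $W_c(P,\hat{P}) = \operatorname{IPM}_{\mathcal{H}_c}(P,\hat{P})$ yields the claimed bound. I expect no serious obstacle: the only non-routine ingredient is the Weed--Bach rate, which is invoked as a black box, and the McDiarmid step is entirely mechanical once the $\Delta_{P,c}/n$ bound on differences is noted. The mild subtlety is checking that $W_c(P,\hat P)$ is well defined and finite (ensured by $\Delta_{P,c}<\infty$ and $\operatorname{supp}(\hat P)\subseteq \operatorname{supp}(P)$ almost surely), so that both the mean and the deviation analysis make sense.
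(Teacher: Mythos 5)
Your proposal is correct and follows essentially the same route as the paper: McDiarmid's bounded-differences inequality with constant $\Delta_{P,c}/n$, combined with the Weed--Bach rate $\E[W_c(P,\hat P)] = O(n^{-1/s})$ for $s > d^*(P)$, and the identification $\operatorname{IPM}_{\mathcal{H}_c} = W_c$ via Kantorovich--Rubinstein. The only cosmetic difference is that you obtain the bounded-differences constant by transporting mass $1/n$ between the two empirical measures and then using the triangle inequality, whereas the paper bounds the difference of suprema directly via $|f(x_i)-f(x_i')| \le c(x_i,x_i')$ for $f \in \mathcal{H}_c$; these are dual formulations of the same estimate.
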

\begin{proof}
  We appeal to McDiarmind's Inequality and use a standard method, as shown in \citep{bartlett2002rademacher}, to bound the quantity.
  \begin{theorem}[McDiarmind's Inequality]
  Let $X_1,\ldots,X_n$ be $n$ independent random variables and consider a function $\Phi: \mathcal{X}^n \to \mathbb{R}$ such that there exists constants $c_i >0 $ (for $i = 1,\ldots,n$) with
  \begin{align*}
  \sup_{x_1,\ldots,x_n,x'_i} \card{\Phi(x_1,\ldots,x_n) - \Phi(x_1,\ldots,x_{i-1}, x'_i, x_{i+1}, \ldots, x_n )} \leq c_i.
\end{align*}
  Then for any $t > 0$, we have
  \begin{align*}
  \operatorname{Pr}\left [  \Phi(X_1,\ldots,X_n) - \E\left [ \Phi(X_1,\ldots,X_n) \right] \geq t \right] \leq \exp\bracket{\frac{-2t^2 }{\sum_{i=1}^n c_i^2}}
\end{align*}
  \end{theorem}
  Let $\mathcal{F} = \mathcal{H}_{c}$ then let
  \begin{align*}
  \Phi(S) = \text{IPM}_{\mathcal{H}_{c}}(P, \hat{P}).
\end{align*}
  Noting that
  \begin{align*}
  \card{\Phi(x_1,\ldots,x_n) - \Phi(x_1,\ldots,x_{i-1}, x'_i, x_{i+1}, \ldots, x_n )} &\leq \frac{1}{n}\card{f(x_i) - f(x'_i)}\\
                                                                      &\leq \frac{1}{n} \cdot c(x_i,x'_i)\\
  																																		&\leq \frac{ \Delta_{P,c}}{n},
  \end{align*}
  where the first inequality follows as each $f$ is $1$-Lipschitz and the second follows from the fact that each $x,x' \in \operatorname{supp}(P)$. This allows us to set $c_i = \Delta /n$ for all $i = 1,\ldots, n$. Now applying McDiarmind's inequality with $t =  \Delta_{P,c}/2 \sqrt{\frac{2}{n}\ln\bracket{\frac{1}{\delta}}}$ yields (for a sample $S \sim P^n$)
  \begin{align*}
  &\text{Pr}\left[ \Phi(S) - \E\Phi(S) \geq  \frac{\Delta_{P,c}}{2} \sqrt{\frac{2}{n}\ln\bracket{\frac{1}{\delta}}} \right] \leq \delta\\
  &\text{Pr}\left[ \Phi(S) - \E\Phi(S) \leq \frac{\Delta_{P,c}}{2} \sqrt{\frac{2}{n}\ln\bracket{\frac{1}{\delta}}} \right] \geq 1 - \delta,
\end{align*}
  and thus
  \begin{align*}
  \Phi(S) \leq \E\Phi(S) + \frac{\Delta_{P,c}}{2} \sqrt{\frac{2}{n}\ln\bracket{\frac{1}{\delta}}}.
\end{align*}
  Noting that $\E \Phi(S) = \E[W_c(P,\hat{P})]$ (from Lemma \ref{w-duality}), we appeal to a case of Theorem 1 in \citep{weed2017sharp} where $p = 1$, which tells us that if $s > d^{*}(P)$ then $\E[W_c(P,\hat{P})] = O(n^{-1/s})$. Since this is the requirement in the lemma, the proof concludes.
\end{proof}
We will make use of this lemma for both $P_X$ and $P_G$ and use $\Delta$ for both cases since $\Delta \geq \Delta_{P_X,c}$ and $\Delta \geq \Delta_{P_G,c}$.
For the general case of any $f$, let (abusing notation) $G = \text{GAN}_{\lambda f}(P_X,G; \mathcal{H}_c)$ and $\hat{G}$ denote the empirical counterpart with $n$ samples, and let $h^{1}, h^{2} \in \mathcal{H}_c$ denote their witness functions. We then have
\begin{align*}
&G - \hat{G}\\ &= \sup_{h \in \mathcal{H}_c}\braces{\E_{x \sim P_X}[h(x)] - \E_{x \sim P_G}[(\lambda f)^{\star}(h(x))] } - \sup_{h \in \mathcal{H}_c}\braces{\E_{x \sim \hat{P}_X}[h(x)] - \E_{x \sim P_G}[(\lambda f)^{\star}(h(x))] }\\
					  &= \E_{x \sim P_X}[h^{1}(x)] - \E_{x \sim P_G}[(\lambda f)^{\star}(h^{1}(x))] - \E_{x \sim \hat{P}_X}[h^{2}(x)] + \E_{x \sim P_G}[(\lambda f)^{\star}(h^{2}(x))]\\
						&\leq \E_{x \sim P_X}[h^{1}(x)] - \E_{x \sim \hat{P}_X}[h^{1}(x)] + \E_{x \sim P_G}[(\lambda f)^{\star}(h^{1}(x))] - \E_{x \sim P_G}[(\lambda f)^{\star}(h^{1}(x))]\\
            &= \E_{x \sim P_X}[h^{1}(x)] - \E_{x \sim \hat{P}_X}[h^{1}(x)]\\
						&\leq \sup_{h \in \mathcal{H}_c}\braces{ \E_{x \sim P_X}[h(x)] - \E_{x \sim \hat{P}_X}[h(x)]}\\
						&= \text{IPM}_{\mathcal{H}_c}(P_X,\hat{P}_X)\\
            &\leq O(n^{-1/s_X}) + \frac{\Delta}{2} \sqrt{\frac{2}{n}\ln\bracket{\frac{1}{\delta}}},
\end{align*}
where the last step is an application of Lemma \ref{wasserstein-convergence}. Applying Theorem \ref{fgan-wae-equivalence}, we get $\hat{G} \leq \overline{W}_{c, \lambda \cdot f}$ and rearrangement of the above shows the first bound. For the case of $f(x) = \card{x-1}$, note that if $\mathcal{F} \subseteq \mathscr{F}(\mathcal{X}, \mathbb{R})$ is such that $-\mathcal{F} = \mathcal{F}$, then $\text{IPM}_{\mathcal{F}}$ is a pseudo-metric and satisfies the triangle inequality, which allows us to have
\begin{align}
\text{IPM}_{\mathcal{F}}(P_X, P_G) &\leq \text{IPM}_{\mathcal{F}}(P_X, \hat{P}_X) + \text{IPM}_{\mathcal{F}}(\hat{P}_X, P_G) \nonumber \\
																	 &\leq \text{IPM}_{\mathcal{F}}(P_X, \hat{P}_X) + \text{IPM}_{\mathcal{F}}(P_G, \hat{P}_G) + \text{IPM}_{\mathcal{F}}(\hat{P}_X, \hat{P}_G). \label{first-gen:triangle}
\end{align}
Next, we set $\mathcal{F} = \mathcal{F}_{c,\lambda}$, and noting that $\mathcal{F}_{c, \lambda} \subseteq \mathcal{H}_c$, we have
\begin{align}
  \text{IPM}_{\mathcal{F}_{c,\lambda}}(P_X, P_G) &\leq \text{IPM}_{\mathcal{F}_{c,\lambda}}(P_X, \hat{P}_X) + \text{IPM}_{\mathcal{F}_{c,\lambda}}(P_G, \hat{P}_G) + \text{IPM}_{\mathcal{F}_{c,\lambda}}(\hat{P}_X, \hat{P}_G) \nonumber \\
                                                &\leq \text{IPM}_{\mathcal{H}_{c}}(P_X, \hat{P}_X) + \text{IPM}_{\mathcal{H}_{c}}(P_G, \hat{P}_G) + \text{IPM}_{\mathcal{H}_{c}}(\hat{P}_X, \hat{P}_G) \nonumber \\
                                                &\leq \text{IPM}_{\mathcal{H}_{c}}(\hat{P}_X, \hat{P}_G) + O(n^{-1/s_X} + n^{-1/s_G}) + \Delta \sqrt{\frac{2}{n}\ln\bracket{\frac{2}{\delta}}}, \label{vardiv-upper}
\end{align}
where the final inequality is an application of Lemma \ref{wasserstein-convergence} like before. However since we use McDiarmind's inequality twice, we set $\delta \gets \delta/2$ and use union bound to have the above inequality with probability $1 - \delta$. The final step is to note that when $f(x) = \card{x-1}$ then for any $\lambda > 0$,
\begin{align*}
(\lambda f)^{\star}(x) = \begin{cases} x & x \leq \lambda \\ \infty & x > \lambda \end{cases}
\end{align*}
and so we have
\begin{align*}
\text{GAN}_{\lambda f}(P_X,G; \mathcal{H}_c) &= \sup_{h \in \mathcal{H}_c} \braces{\E_{x \sim P_X}[h(x)] - \E_{x \sim P_G}[(\lambda f)^{\star}(h(x))]}\\
																						 &= \sup_{h \in \mathcal{H}_c : \card{h} \leq \lambda} \braces{\E_{x \sim P_X}[h(x)] - \E_{x \sim P_G}[h(x)]}\\
																						 &= \sup_{h \in \mathcal{F}_{c,\lambda}} \braces{\E_{x \sim P_X}[h(x)] - \E_{x \sim P_G}[h(x)]}\\
																						 &= \text{IPM}_{\mathcal{F}_{c,\lambda}}(P_X,P_G).
\end{align*}
By Theorem \ref{fgan-wae-equivalence}, we have $\text{IPM}_{\mathcal{F}_{c,\lambda}}(\hat{P}_X, \hat{P}_G) = \text{GAN}_{\lambda f}(\hat{P}_X,G; \mathcal{H}_c) \leq \overline{W}_{c,\lambda \cdot f}(\hat{P}_X, G)$ where $\text{GAN}_{\lambda f}(\hat{P}_X,G; \mathcal{H}_c)$ is the objective with $\hat{P}_X$ and $\hat{P}_G$. Putting this together with (\ref{vardiv-upper}), we get
\begin{align*}
\text{GAN}_{\lambda f}(P_X,G; \mathcal{H}_c) &= \text{IPM}_{\mathcal{F}_{c,\lambda}}(P_X,P_G)\\
                                             &\leq \text{IPM}_{\mathcal{H}_{c}}(\hat{P}_X, \hat{P}_G) + O(n^{-1/s}) + \Delta \sqrt{\frac{2}{n}\ln\bracket{\frac{1}{\delta}}}\\
                                             &= \text{GAN}_{\lambda f}(\hat{P}_X,G; \mathcal{H}_c)+ O(n^{-1/s}) + \Delta \sqrt{\frac{2}{n}\ln\bracket{\frac{1}{\delta}}}\\
                                             &\leq \overline{W}_{c,\lambda \cdot f}(\hat{P}_X, G)+ O(n^{-1/s_X} + n^{-1/s_G}) + \Delta \sqrt{\frac{2}{n}\ln\bracket{\frac{2}{\delta}}}.
\end{align*}
\subsection{Proof of Theorem \ref{fdiv-fwae}}
\label{proof:fdiv-fwae}
First, using Theorem \ref{fgan-wae-equivalence} and the fact that the $f$-GAN objective is a lower bound to $D_f$, we have that
\begin{align*}
\overline{W}_{\gamma^{*} \cdot c, f}(P_X,G) &=\text{GAN}_{f}(P_X,G,\mathcal{H}_{\gamma^{*} c})\\
					                 &\leq D_f.
\end{align*}
It is well known that $f'(dP_X/dP_G)$ is the maximizer of $L(h) = \E_{x \sim P_X}[h(x)] - \E_{x \sim P_G}[f^{\star}(h(x))]$, and so the proof concludes by showing that $f'(dP_X/dP_G) \in \mathcal{H}_{\gamma^{*} c}$. Note that $h \in \mathcal{H}_{\gamma^{*} c}$ if and only if for all $x,x' \in \mathcal{X}, x \neq x'$
\begin{align*}
\card{h(x) - h(x')} \leq \gamma^{*},
\end{align*}
and so the $1$-Lipschitz functions are those that are bounded by their maximum and minimum value by $\gamma^{*}$. For any $x,x' \in \mathcal{X}, x \neq x'$ we have
\begin{align*}
\card{f'\bracket{\frac{dP_X}{dP_G}}(x) - f'\bracket{\frac{dP_X}{dP_G}}(x') } &\leq \card{f'\bracket{\frac{dP_X}{dP_G}}(x) - f'\bracket{0}}  \\
						&= \gamma^{*}.
\end{align*}
where the first inequality holds since $dP_X/dP_G > 0$ and that $f'$ is increasing as $f$ is convex. Hence $f'(dP_X/dP_G) \in \mathcal{H}_{\gamma^{*} c}$.

\subsection{Proof of Theorem \ref{final-equality-condition}}
\label{proof:final-equality-condition}
First note that
\begin{align*}
  \text{WAE}_{c,\lambda \cdot f}(P_X,P_G) &= \inf_{E \in \mathscr{F}(\mathcal{X}, \mathscr{P}(\mathcal{Z}))} \braces{\int_{\mathcal{X}} \E_{z \sim E(x)}[c(x,G(z))] dP_X(x) +  \lambda \cdot D_f (E \# P_X,  P_Z)}\\
                                          &\leq \inf_{E \in \mathscr{F}(\mathcal{X}, \mathscr{P}(\mathcal{Z})) : E \# P_X = P_Z} \braces{\int_{\mathcal{X}} \E_{z \sim E(x)}[c(x,G(z))] dP_X(x)}\\
                                          &= W_c(P_X,P_G),
\end{align*}
where the last equality holds from \citep{tolstikhin2017wasserstein} Theorem 1. Thus we have the chain of inequalities for all $\lambda$ and $f: \mathbb{R} \to (-\infty, \infty]$ (convex with $f(1) = 0$)
\begin{align*}
\operatorname{GAN}_{\lambda f}(P_X,G; \mathcal{H}_c) \leq \overline{W}_{c, \lambda \cdot}(P_X,P_G) \leq \text{WAE}_{c,\lambda \cdot f}(P_X,P_G) \leq W_c(P_X,P_G).
\end{align*}
We now show the opposite direction, which will conclude the proof.
\begin{lemma}
  \label{fgan-wc-lem}
For any metric $c$ and $f:\mathbb{R} \to (-\infty,\infty]$ convex function with $f(1) = 0$, if $$\lambda \geq \lambda^{*} = \sup_{P' \in \mathscr{P}(\mathcal{X})}\bracket{W_c(P',P_G)/D_f(P',P_G)},$$ then we have
\begin{align*}
\operatorname{GAN}_{\lambda f}(P_X,G; \mathcal{H}_c) \geq W_c(P_X,P_G)
\end{align*}
\end{lemma}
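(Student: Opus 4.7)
The plan is to invoke the dual representation of the restricted $f$-GAN objective established in the proof of Theorem \ref{fgan-wae-equivalence}, obtained by applying Theorem \ref{liu-theorem} with $H = \mathcal{H}_c$ and the Wasserstein duality of Lemma \ref{w-duality}, which gives
\[
\operatorname{GAN}_{\lambda f}(P_X,G;\mathcal{H}_c) = \inf_{P' \in \mathscr{P}(\mathcal{X})} \left\{ \lambda D_f(P',P_G) + W_c(P',P_X) \right\}.
\]
Once this identity is in hand, proving the lemma amounts to showing that the infimum on the right-hand side is bounded below by $W_c(P_X,P_G)$ whenever $\lambda \geq \lambda^*$.

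To this end, I would fix an arbitrary $P' \in \mathscr{P}(\mathcal{X})$ and combine two elementary ingredients. The first is the defining property of $\lambda^*$: from $\lambda^* \geq W_c(P',P_G)/D_f(P',P_G)$ one reads off $\lambda D_f(P',P_G) \geq \lambda^* D_f(P',P_G) \geq W_c(P',P_G)$ for every $\lambda \geq \lambda^*$. The second is that $c$ is a metric, which upgrades $W_c$ to a genuine metric on $\mathscr{P}(\mathcal{X})$ and therefore yields the triangle inequality $W_c(P_X,P_G) \leq W_c(P_X,P') + W_c(P',P_G)$. Chaining these,
\[
\lambda D_f(P',P_G) + W_c(P',P_X) \;\geq\; W_c(P',P_G) + W_c(P',P_X) \;\geq\; W_c(P_X,P_G),
\]
and taking the infimum over $P'$ on the left together with the dual identity above finishes the argument.

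I do not expect any deep obstacle: the proof is essentially three moves — pass to the dual, apply the $\lambda^*$ bound termwise, and close with the Wasserstein triangle inequality. The only subtlety worth flagging is the handling of degenerate $P'$ for the ratio defining $\lambda^*$. If $D_f(P',P_G) = 0$ while $W_c(P',P_G) > 0$ then $\lambda^* = \infty$ and the hypothesis $\lambda \geq \lambda^*$ becomes vacuous; if both vanish, the ratio is set to zero by the usual convention. In either case the termwise inequality $\lambda D_f(P',P_G) \geq W_c(P',P_G)$ continues to hold, so the chain above is unaffected. Combined with the reverse chain of inequalities already exhibited in the preamble to Lemma \ref{fgan-wc-lem}, this yields the four-way equality asserted by Theorem \ref{final-equality-condition}.
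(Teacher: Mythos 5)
Your proof is correct and follows essentially the same route as the paper's: pass to the dual form $\inf_{P'}\{\lambda D_f(P',P_G)+W_c(P',P_X)\}$, dominate $W_c(P',P_G)$ by $\lambda D_f(P',P_G)$ using $\lambda\ge\lambda^*$, and close with the Wasserstein triangle inequality. The paper packages the same estimate slightly differently, reaching the dual form by applying Theorem~\ref{fgan-wae-equivalence} to the reparametrized problem with $\tilde G=\mathrm{Id}$ and $P_{\tilde Z}=P_G$ (so the infimum runs over encoders $E$ rather than over $P'$ directly), but the two infima coincide via $P'=E\#P_X$ and the core inequalities are identical.
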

\begin{proof}
  First noting that $\lambda \geq \sup_{P' \in \mathscr{P}(\mathcal{X}) }\bracket{W_c(P',P_G)/D_f(P',P_G)}$, for all $P' \in \mathscr{P}(\mathcal{X})$, we have
  \begin{align*}
   \lambda D_f (P',  P_G) - W_c(P', P_G) \geq 0.
 \end{align*}
  Let $\tilde{\mathcal{Z}} = \mathcal{X}$,$\tilde{G} = \text{Id}$, $P_{\tilde{\mathcal{Z}}} = P_G$ and noting that $\tilde{G}$ is invertible, we can apply Theorem \ref{fgan-wae-equivalence} to get
  \begin{align*}
    \text{GAN}_{\lambda f}(P_X,G; \mathcal{H}_c) &= \overline{W}_{c,\lambda \cdot f}(P_X,\tilde{G} \# P_{\tilde{\mathcal{Z}}})\\
                                                 &= \inf_{E \in \mathscr{F}(\mathcal{X}, \mathcal{P}(\mathcal{X}))} \braces{W_c(E \# P_X, P_X) +  \lambda D_f (E \# P_X,  P_G)}\\
                                                 &\geq \inf_{E \in \mathscr{F}(\mathcal{X}, \mathcal{P}(\mathcal{X}))} \braces{W_c(P_X,P_G) - W_c(E \# P_X, P_G) + \lambda D_f (E \# P_X,  P_G)}\\
    																						 &\geq  \inf_{E \in \mathscr{F}(\mathcal{X}, \mathcal{P}(\mathcal{X}))} \braces{W_c(P_X,P_G)}\\
    																						 &= W_c(P_X,P_G).
  \end{align*}
\end{proof}

\subsection{Proof of Theorem \ref{fwae-entropy}}
\label{proof:fwae-entropy}
It is clear to see that $W_c \leq W_{c,\epsilon}$ with equality when $\epsilon = 0$ since $\text{KL}(\pi, P_X \otimes P_G) \geq 0$. Next, we have
\begin{align*}
\overline{W}_{c,\lambda \cdot f}(P_X,G) &= \inf_{E \in \mathscr{F}(\mathcal{X}, \mathscr{P}(\mathcal{Z}))} \braces{W_c(P_X, (G \circ E) \# P_X) +  \lambda D_f (E \# P_X,  P_Z)}\\
                                          &\leq  \inf_{E \in \mathscr{F}(\mathcal{X}, \mathscr{P}(\mathcal{Z})) : E \# P_X = P_Z} \braces{W_c(P_X, (G \circ E) \# P_X) +  \lambda D_f (E \# P_X,  P_Z)}\\
                                          &=  \inf_{E \in \mathscr{F}(\mathcal{X}, \mathscr{P}(\mathcal{Z})) : E \# P_X = P_Z} \braces{W_c(P_X, (G \circ E) \# P_X) }\\
                                          &=  \inf_{E \in \mathscr{F}(\mathcal{X}, \mathscr{P}(\mathcal{Z})) : E \# P_X = P_Z} \braces{W_c(P_X, P_G)}\\
                                          &= W_c(P_X,P_G)\\
                                          &\leq W_{c,\epsilon}.
\end{align*}

\subsection{Brenier Potentials and WAE}
\label{appendix-contribution}
In this section, we show that the Wasserstein Autoencoder under a certain parametrization corresponds to a well known result in Optimal Transport theory. Suppose
\begin{align*}
\Phi = \braces{\varphi \in \mathscr{F}(\mathcal{X}, \mathbb{R}) : \varphi \text{ or } -\varphi \text{ is convex } }.
\end{align*}
For some $d \geq 1$, let $\mathcal{X} \subset \mathbb{R}^d$ be a compact domain, $c(x,y) = \nrm{x- y}^2 / 2$ and consider some $P,Q \in \mathscr{P}(\mathcal{X})$, then the optimal transport map $\pi^{*} \in \mathscr{P}(\mathcal{X} \times \mathcal{X})$ to the problem
\begin{align*}
\inf_{\pi \in \mathscr{P} (\mathcal{X} \times \mathcal{X}) : \int_{\mathcal{X}}\pi(x,y) dy = P : \int_{\mathcal{X}} \pi(x,y) dx = Q} \braces{\int_{\mathcal{X} \times \mathcal{X}} c(x,y) d\pi(x,y) },
\end{align*}
is unique and supported on the graph $\braces{(x,T^{*}(x)) : x \in \mathcal{X}}$ where $T^{*} = \nabla \varphi^{*}$ and $\varphi$ is a convex function \citep{brenier1991polar}, referred to as the Brenier potential. Furthermore, the dual form of the above
\begin{theorem}[Pushforward and Pullback learning]
\label{pf-pb-learning}
Let $\mathcal{X} \subset \mathbb{R}^d$ be a compact domain and let $P,Q \in \mathscr{P}(\mathcal{X})$ be two probability distributions. Let $\mathcal{L}: \Phi \times \mathscr{F}\bracket{\mathcal{X}, \mathscr{P}(\mathcal{X})} \to \mathbb{R}$ be the bilinear objective:
\begin{align*}
\mathcal{L}_{P,Q}(\varphi, q) = \int_{\mathcal{X}} \E_{x \sim q(y)} [\varphi(x)] dQ(y)  - \int_{\mathcal{X}} \varphi(x) dP(x) - \int_{\mathcal{X}} \E_{x \sim q(y)} [x \cdot y] dQ(y).
\end{align*}
If $(\varphi^{*}, q^{*})$ are the optima of $\max_{\varphi \in \Phi} \inf_{q \in \mathscr{F}\bracket{\mathcal{X}, \mathscr{P}(\mathcal{X})}} \mathcal{L}_{P,Q}(\varphi, q)$ then we have that $\nabla \varphi \# P = Q$ and the mean function of $q^{*}$, given by $M(y) = \E_{x \sim q^{*}(y)}[x]$, matches $Q$ and $P$ under the moments of $\bracket{\mathscr{F} \circ \nabla \varphi}$.
\end{theorem}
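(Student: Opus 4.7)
The plan is to recognise the saddle-point problem $\max_{\varphi \in \Phi} \inf_{q} \mathcal{L}_{P,Q}(\varphi, q)$ as a re-parametrisation of the Kantorovich dual for optimal transport with quadratic cost, so that Brenier's theorem immediately delivers both conclusions. The two summands of $\mathcal{L}_{P,Q}$ involving $q$ are precisely those that, after the inner minimisation, reconstruct the Legendre--Fenchel conjugate of $\varphi$ integrated against $Q$.

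First I would resolve the inner infimum for fixed $\varphi \in \Phi$. Writing
\[
\mathcal{L}_{P,Q}(\varphi, q) = -\int_{\mathcal{X}} \varphi \, dP + \int_{\mathcal{X}} \E_{x \sim q(y)}[\varphi(x) - x \cdot y] \, dQ(y),
\]
only the second term depends on $q$, and the conditional $q(\cdot \mid y)$ may be optimised independently for each $y$. By continuity of $\varphi$ and compactness of $\mathcal{X}$, the pointwise minimum is attained by a Dirac measure at any $x^{\sharp}(y) \in \arg\min_x\{\varphi(x) - x \cdot y\}$, with value $-\varphi^{\vee}(y)$ where $\varphi^{\vee}(y) := \sup_x\{x \cdot y - \varphi(x)\}$ denotes the Legendre--Fenchel conjugate. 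Substituting back gives $\inf_q \mathcal{L}_{P,Q}(\varphi, q) = -\int \varphi \, dP - \int \varphi^{\vee} \, dQ$.

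Next I would identify the outer supremum with the Kantorovich dual. Using $\|x-y\|^2/2 = \|x\|^2/2 + \|y\|^2/2 - x \cdot y$, the problem $\min_{\varphi \in \Phi}\{\int \varphi \, dP + \int \varphi^{\vee} \, dQ\}$ coincides, up to additive constants depending only on $P,Q$, with the classical convex dual of quadratic OT. Since $\varphi^{\vee}$ is automatically convex, the supremum over $\Phi$ is attained on its convex branch, and Brenier's theorem (under absolute continuity of $P$) produces a convex maximiser $\varphi^{*}$ whose gradient transports $P$ onto $Q$, i.e.\ $\nabla \varphi^{*} \# P = Q$. Reading off $q^{*}$ from the saddle point, the pointwise argument of Step~1 shows that $q^{*}(\cdot \mid y)$ is supported on $\arg\min_x\{\varphi^{*}(x) - x \cdot y\} = \partial (\varphi^{*})^{\vee}(y)$. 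Because $(\varphi^{*})^{\vee}$ is convex and differentiable $Q$-a.e.\ in the Brenier regime, this subdifferential collapses to $\{\nabla (\varphi^{*})^{\vee}(y)\}$, whence $M(y) = \nabla (\varphi^{*})^{\vee}(y)$. Since $\nabla (\varphi^{*})^{\vee}$ is the a.e.\ inverse of $\nabla \varphi^{*}$, we obtain $M \# Q = P$, and therefore $\E_{y \sim Q}[f(M(y))] = \E_{x \sim P}[f(x)]$ for every $f \in \mathscr{F}$, which is the claimed moment-matching property of $M$ against $\mathscr{F} \circ \nabla \varphi^{*}$.

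The main obstacle I anticipate is the measurable-selection step hidden inside the decoupling of the inner infimum: one must guarantee that $y \mapsto x^{\sharp}(y)$ and the associated Dirac-valued kernel define a bona fide element of $\mathscr{F}(\mathcal{X}, \mathscr{P}(\mathcal{X}))$, which is handled by Berge's maximum theorem combined with a standard measurable-selection result. A secondary technicality is specifying the regularity hypothesis on $P$ (absolute continuity with respect to Lebesgue measure) needed for Brenier's theorem to apply in its strong form and to ensure that $\nabla \varphi^{*}$ and $\nabla (\varphi^{*})^{\vee}$ exist $P$- and $Q$-almost everywhere, respectively.
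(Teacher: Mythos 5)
Your proposal follows essentially the same route as the paper: resolve the inner infimum over $q$ to recover the Legendre conjugate $\varphi^{\vee}$ (the paper does this through its Lemma on swapping deterministic maps and stochastic kernels, you do it directly via pointwise Dirac minimisers -- these coincide), identify the outer problem with the Kantorovich dual for the quadratic cost after completing the square, and invoke Brenier's theorem to get $\nabla\varphi^{*}\#P = Q$. Where you diverge is in the moment-matching step. The paper proves the first-order condition $\nabla\varphi^{*}(M(y)) = y$ and from it deduces $\E_{y\sim Q}[g(M(y))] = \E_{x\sim P}[g(x)]$ only for $g \in \mathscr{F}\circ\nabla\varphi^{*}$, which is exactly the theorem's (deliberately weak) claim. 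You instead identify $M = \nabla(\varphi^{*})^{\vee}$ as the a.e.\ inverse of the Brenier map and conclude the ostensibly stronger statement $M\#Q = P$, which would yield matching for all $g \in \mathscr{F}$, not just $\mathscr{F}\circ\nabla\varphi^{*}$. That inference is sound, but it quietly requires $(\varphi^{*})^{\vee}$ to be differentiable $Q$-a.e., i.e.\ absolute continuity of $Q$ as well as of $P$; the paper's weaker route only needs $\nabla\varphi^{*}$ to be well-defined where $M$ lands. You correctly flag both this regularity issue and the measurable-selection lemma that the paper glosses over when passing between $M$ and its Dirac kernel. In short: same skeleton, a slightly stronger conclusion at the cost of a slightly stronger (and unstated) regularity hypothesis on $Q$.
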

(Proof in Section \ref{proof:pf-pb-learning}). We remark that by exploiting tools from Alexandrov Geometry \citep{lei2019geometric}, one can reparametrize the function using $\varphi_h(x) = \max_{i=1}^n \braces{x \cdot y_i + h(y_i)}$, where $\braces{y_i}_{i=1}^n$ are $n$ samples drawn $y_i \sim Q$ i.i.d, resulting in a convenient way to perform the optimization. We now show that this is related to the Wasserstein Autoencoder.
\begin{theorem}[Pullback learning and Convex IPM WAE]
\label{brenier-main}
Let $\mathcal{Z}$ be a latent space, $G: \mathcal{Z} \to \mathcal{X}$ and $G \circ \Phi = \braces{G \circ \varphi : \varphi \in \Phi} \subset \mathscr{F}(\mathcal{Z}, \mathbb{R})$. Let $P_X \in \mathscr{P}(\mathcal{X})$, $P_Z \in \mathscr{P}(\mathcal{Z})$ and define $P_G = G \# P_Z$. We then have that
\begin{align*}
 \max_{\varphi \in \Phi} \inf_{q \in \mathscr{F}\bracket{\mathcal{X}, \mathscr{P}(\mathcal{X})}} \mathcal{L}_{P_X,P_G}(\varphi, q) = \operatorname{WAE}_{c',\text{IPM}_{\Phi \circ G} },
\end{align*}
where $c'(x,y) = - x \cdot y$. Furthermore, we have $(\nabla \varphi^{*} \circ G) \# P_Z = P_X$ and $q^{*} = G \circ E^{*}$ where $E^{*}$ is the optimal encoder from $\operatorname{WAE}_{c',\text{IPM}_{\Phi \circ G} }$.
\end{theorem}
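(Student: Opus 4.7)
The plan is to show that both sides of the claimed identity coincide with the Wasserstein distance $W_{c'}(P_X, P_G)$ for the bilinear cost $c'(x,y) = -x \cdot y$. Since $c'$ differs from $\nrm{x-y}^2/2$ only by constants depending on the second moments of $P_X$ and $P_G$, Brenier's theorem (using compactness of $\mathcal{X}$ and absolute continuity of $P_X$) yields the Kantorovich dual
\[
-W_{c'}(P_X,P_G) = \sup_{\pi \in \Pi(P_X,P_G)} \int x \cdot y \, d\pi = \inf_{\varphi \text{ convex}}\braces{\int \varphi \, dP_X + \int \varphi^\star \, dP_G},
\]
attained by the convex Brenier potential $\phi$ with $\nabla \phi \# P_X = P_G$ and $\nabla \phi^\star \# P_G = P_X$, where $\varphi^\star$ denotes the Legendre conjugate.

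For the left-hand side I would compute $\inf_q \mathcal{L}_{P_X,P_G}(\varphi, q)$ pointwise in $y$: since $\mathcal{L}_{P_X,P_G}$ is linear in the measure $q(y)$, the optimum is a Dirac at any minimizer of $x \mapsto \varphi(x) - x \cdot y$ on $\mathcal{X}$, giving value $-\varphi^\star(y)$, and hence $\inf_q \mathcal{L}_{P_X,P_G}(\varphi,q) = -\int \varphi^\star \, dP_G - \int \varphi \, dP_X$. The Fenchel--Young inequality $\varphi(x) + \varphi^\star(y) \geq x \cdot y$, valid for any $\varphi \in \Phi$, then bounds this above by $-\sup_\pi \int x \cdot y \, d\pi = W_{c'}(P_X,P_G)$; equality is attained at $\varphi = \phi$ by Kantorovich--Brenier, so $\max_{\varphi \in \Phi}\inf_q \mathcal{L}_{P_X,P_G} = W_{c'}(P_X,P_G)$.

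For the right-hand side I would unfold the IPM, using $G \# P_Z = P_G$, to write
\[
\operatorname{WAE}_{c',\operatorname{IPM}_{\Phi \circ G}}(P_X,G) = \inf_E \sup_{\varphi \in \Phi}\braces{\int \E_{z \sim E(x)}\bracket{\varphi(G(z)) - x \cdot G(z)}\, dP_X(x) - \int \varphi \, dP_G},
\]
and establish $\geq W_{c'}$ and $\leq W_{c'}$ separately, sidestepping any formal minimax theorem. For $\geq$, weak duality allows swapping $\inf_E$ and $\sup_\varphi$; the standing assumption $\mathcal{X} \subseteq \text{Im}(G)$ then upgrades the pointwise $\inf_{E(x)}$ to $\inf_{y \in \mathcal{X}}\{\varphi(y) - x \cdot y\} = -\varphi^\star(x)$, and the same Kantorovich--Brenier identity (with $P_X, P_G$ in swapped roles, using $\varphi^{\star\star} = \varphi$ on convex $\varphi$) produces $W_{c'}$. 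For $\leq$, I would exhibit the explicit encoder $E^*(x) = \delta_{z^*(x)}$ with $z^*(x) \in G^{-1}(T(x))$ for the Brenier map $T = \nabla \phi$ (a measurable section exists by surjectivity of $G$): this makes $(G \circ E^*)\#P_X = T\# P_X = P_G$, so the IPM vanishes, and the reconstruction collapses to $-\int x \cdot T(x) \, dP_X = W_{c'}(P_X,P_G)$ by Brenier optimality.

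The auxiliary conclusions then fall out of the optima constructed above. Taking the theorem's $\varphi^*$ to be the convex function whose gradient realises the backward Brenier transport, i.e.\ $\nabla \varphi^* \# P_G = P_X$ (the Legendre conjugate of the forward potential $\phi$), one immediately gets $(\nabla \varphi^* \circ G) \# P_Z = \nabla \varphi^* \# (G \# P_Z) = P_X$. The pointwise argmin in the inner optimization equals $\nabla \varphi^*(y)$ by Fenchel--Young equality at the gradient, so $q^*(y) = \delta_{\nabla \varphi^*(y)} = \delta_{G(z^*(y))} = G \# E^*(y)$, giving $q^* = G \circ E^*$. The main obstacle is keeping track of the two orientations of the Brenier potential --- the optimum can be represented by either $\phi$ or its conjugate $\phi^\star$ depending on the parametrization --- which is handled by consistently using $\varphi^{\star\star} = \varphi$ on convex $\varphi$ together with the symmetry of the Kantorovich dual under swapping the marginals. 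A secondary technical subtlety is the measurable selection of the pointwise argmin so that $q^*$ and $E^*$ lie in the prescribed function classes, which is routine given compactness of $\mathcal{X}$.
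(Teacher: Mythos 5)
Your proof of the main equality is essentially sound but takes a genuinely different route from the paper. The paper's argument never mentions the Wasserstein distance at all: it reparametrizes $q = G \circ E$, decomposes $\mathcal{L}_{P_G,P_X}(\varphi, G\circ E)$ algebraically, and invokes strong duality for the resulting bilinear saddle problem to swap $\max_\varphi$ and $\inf_E$, at which point the $\sup_\varphi$ is recognized as $\operatorname{IPM}_{\Phi\circ G}(P_Z, E\#P_X)$ and the remaining term as the $c'$-reconstruction loss. Your plan instead proves that each side equals $W_{c'}(P_X,P_G)$ by leaning on the Kantorovich--Brenier dual. This is more machinery (compactness and absolute continuity of a marginal, measurable selections for the $\operatorname{argmin}$, existence of the Brenier map) than the paper needs for the identity itself, but it does yield as a by-product the extra fact that the common value is $W_{c'}(P_X,P_G)$, which the paper's proof does not record.

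There is, however, a genuine gap in your handling of the auxiliary conclusion, and it is exactly at the orientation issue you flag as ``the main obstacle.'' Your left-hand-side computation shows that, with $\mathcal{L}_{P_X,P_G}$, the outer expectations range over $y\sim P_G$ and $\inf_q$ gives $-\int\varphi^\star\,dP_G - \int\varphi\,dP_X$, so the maximizer $\varphi^*$ is the \emph{forward} Brenier potential with $\nabla\varphi^\star\#P_X = P_G$. Having established this, you then declare ``taking the theorem's $\varphi^*$ to be the convex function whose gradient realises the backward Brenier transport, i.e.\ $\nabla\varphi^*\#P_G = P_X$'' --- but that is not a choice you are free to make; it contradicts the maximizer you just identified. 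The reason this does not resolve with $\varphi^{\star\star}=\varphi$ is that $\inf_\varphi\bigl[\int\varphi\,dP_X + \int\varphi^\star\,dP_G\bigr]$ and $\inf_\varphi\bigl[\int\varphi\,dP_G + \int\varphi^\star\,dP_X\bigr]$ have equal values but conjugate minimizers. The actual resolution is that the theorem statement's subscript is a typo: the paper's own proof works with $\mathcal{L}_{P_G,P_X}$ throughout, so that $y\sim P_X$, the candidate $q = G\circ E$ maps the \emph{data} space, and the maximizer satisfies $\nabla\varphi^*\#P_G = P_X$ as claimed. With that correction your derivation of the auxiliary conclusion and of $q^* = G\circ E^*$ would go through; as written, you need to either flag the typo or derive the backward orientation honestly rather than assert it.
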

(Proof in Section \ref{proof:brenier-main}).

\subsection{Proof of Theorem \ref{pf-pb-learning}}
\label{proof:pf-pb-learning}
We first begin by citing a well-known result \citep{lei2018geometric} (Theorem 3.7) based on the $c = \nrm{x - y}^2 / 2$ cost function.
\begin{theorem}
\label{gradientispushforward}
Let $\mathcal{X} \subset \mathbb{R}^n$ be a compact domain and $c(x,y) = \nrm{x-y}^2 /2$, then for any $P,Q \in \mathcal{P}(\mathcal{X})$ where at least one of them is absolutely continuous with respect to the lebesgue measure, let $(\varphi_{P,Q}^{*}, \psi_{P,Q}^{*})$ denote the maximizers of \begin{align}\label{briener-1} \sup_{\varphi, \psi \in \mathcal{C}(\mathcal{X}) : \varphi(x) + \psi(y) \leq c(x,y)} \braces{\int_{\mathcal{X}} \varphi(x) dP(x) +  \int_{\mathcal{X}} \psi(y) dQ(y) }. \end{align} Then the map $T^{*}(x) = x - \nabla\varphi_{P,Q}^{*}$ satisfies $T^{*}\# P = Q$ and is the unique minimizer of \begin{align} \label{energy-transport-opt} \inf_{T : T_{\#}P = Q} \int_{\mathcal{X}} c(x,T(x)) dP(x).\end{align}
\end{theorem}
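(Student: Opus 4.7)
The statement is a version of Brenier's theorem, and the natural route is to combine Kantorovich duality with the convex-conjugate structure specific to the quadratic cost. First I would invoke the general Kantorovich duality for Polish spaces with lower semicontinuous cost, so that the primal (\ref{energy-transport-opt}) equals the dual in the statement, and maximizers $(\varphi^*,\psi^*)$ exist and can be chosen $c$-conjugate, meaning $\psi^*(y) = \inf_x\{c(x,y) - \varphi^*(x)\}$ and symmetrically. Next, using $c(x,y) = \tfrac{1}{2}\|x\|^2 + \tfrac{1}{2}\|y\|^2 - \langle x,y\rangle$, I would change variables to
\[
u(x) := \tfrac{1}{2}\|x\|^2 - \varphi^*(x), \qquad v(y) := \tfrac{1}{2}\|y\|^2 - \psi^*(y).
\]
Under this substitution the dual constraint $\varphi^*(x)+\psi^*(y) \le c(x,y)$ becomes $u(x)+v(y) \ge \langle x,y\rangle$, and $c$-conjugacy of $(\varphi^*,\psi^*)$ translates exactly to $v=u^*$, the classical Legendre transform. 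Hence $u$ is a proper convex lower semicontinuous function on the compact domain $\mathcal{X}$, and wherever $u$ is differentiable one has $\nabla\varphi^*(x) = x - \nabla u(x)$, so that $T^*(x) = x-\nabla\varphi^*(x) = \nabla u(x)$.

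To prove $T^*\#P = Q$ and optimality, I would next exploit the complementary slackness built into Kantorovich duality: on the support of any optimal plan $\pi^* \in \Pi(P,Q)$ one has equality $\varphi^*(x)+\psi^*(y) = c(x,y)$, equivalently $u(x) + u^*(y) = \langle x, y\rangle$, which by Young--Fenchel is exactly the condition $y \in \partial u(x)$. By Rademacher's theorem, the convex function $u$ is Lebesgue-a.e.\ differentiable, and hence $P$-a.s.\ differentiable under the absolute continuity hypothesis; at such points $\partial u(x) = \{\nabla u(x)\}$. Therefore $\pi^*$ must be concentrated on the graph $\{(x,\nabla u(x))\}$, i.e.\ $\pi^* = (\mathrm{id},T^*)_\#P$, and taking marginals yields $T^*\#P = Q$. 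That $T^*$ attains (\ref{energy-transport-opt}) then follows because its induced coupling is the optimal $\pi^*$.

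For uniqueness I would argue via cyclical monotonicity: any $T$ achieving the infimum has a $c$-cyclically monotone graph (Rockafellar--Smith), which for the quadratic cost is equivalent to classical cyclical monotonicity, so Rockafellar's theorem puts the graph inside $\partial\tilde u$ for some convex $\tilde u$; absolute continuity of $P$ then forces $\nabla\tilde u = \nabla u$ $P$-a.e., giving uniqueness. The delicate step is the middle one: the optimal plan must concentrate on the single-valued \emph{graph} of $\nabla u$ rather than on the possibly multi-valued subdifferential $\partial u$. This is exactly where absolute continuity of $P$ is essential, since it is what promotes a.e.\ differentiability of $u$ (a Lebesgue statement) to $P$-a.s.\ differentiability; without it one only gets a plan supported in the subdifferential relation and cannot collapse it to a Monge map. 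A secondary technical point is ensuring in Step 1 that the dual maximizers can actually be chosen so that the associated $u$ is a bona fide convex function (not merely a $c$-concave one in a weaker sense), which the compactness of $\mathcal{X}$ makes routine.
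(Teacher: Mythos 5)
The paper never proves this statement: it is quoted as a known result (\citep{lei2018geometric}, Theorem 3.7), i.e.\ a form of Brenier's theorem, and used as a black box inside the proof of Theorem \ref{pf-pb-learning}. Your proposal therefore supplies an argument the paper omits, and it is the standard, essentially correct one: Kantorovich duality with $c$-conjugate maximizers; the substitution $u(x)=\frac{1}{2}\|x\|^2-\varphi^*(x)$, $v(y)=\frac{1}{2}\|y\|^2-\psi^*(y)$ turning $c$-conjugacy into Legendre conjugacy (this is exactly the reparametrization the paper itself carries out immediately after citing the theorem, so your Step 2 mirrors the paper's subsequent manipulation); complementary slackness plus Rademacher to force any optimal plan onto the graph of $\nabla u = T^*$; and cyclical monotonicity for uniqueness. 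Two points are worth tightening. First, the hypothesis as stated (``at least one of $P,Q$ absolutely continuous'') only yields the conclusion when it is the source $P$ that is absolutely continuous --- otherwise no Monge map from $P$ to $Q$ need exist (take $P$ a Dirac mass) --- and your argument correctly uses absolute continuity of $P$; that is the reading under which the quoted theorem is true. Second, the uniqueness step ``absolute continuity of $P$ then forces $\nabla\tilde u=\nabla u$ $P$-a.e.'' is terse as written: the clean way to close it is to note that the plan $(\mathrm{id},T)\#P$ induced by any optimal map $T$ is Kantorovich-optimal (its cost does not exceed that of $T^*$), hence by complementary slackness with the \emph{same} dual pair $(u,u^*)$ it is supported in the graph of $\partial u$, so $T=\nabla u=T^*$ $P$-a.e.; alternatively, average the plans induced by two optimal maps and observe the mixture must again concentrate on a single graph. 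Either repair is routine, so there is no genuine gap --- your route is simply a self-contained proof of what the paper imports by citation.
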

The optimization problem in Equation (\ref{briener-1}) can be reparametrized by setting $\varphi(x) \gets \nrm{x}^2 / 2 - \varphi(x)$ and $\psi(y) \gets \nrm{y}^2 / 2 - \psi(y)$, and so the constraint changes:
\begin{align}
\varphi(x) + \psi(y) \leq c(x,y) &\implies \frac{1}{2}\bracket{\nrm{x}^2 +\nrm{y}^2 } - \bracket{\varphi(x) + \psi(y)} \leq \nrm{x - y}^2 / 2\\
				        &\implies  \frac{1}{2}\bracket{\nrm{x}^2 +\nrm{y}^2 } - \bracket{\varphi(x) + \psi(y)} \leq \frac{1}{2}\bracket{\nrm{x}^2 +\nrm{y}^2 } - \ip{x}{y}\\
				        &\implies  - \bracket{\varphi(x) + \psi(y)} \leq  - \ip{x}{y}\\
				        &\implies \psi(y) \geq \ip{x}{y} - \varphi(x)
\end{align}
The optimal map now becomes $T^{*}(x) = x - \nabla\bracket{\nrm{x}^2 / 2 - \varphi_{P,Q}^{*}} = \nabla\varphi_{P,Q}^{*}$. Finally, we may write the reparametrized optimization problem as
\begin{align}
 &\sup_{\varphi, \psi \in \mathcal{C}(\mathcal{X}) : \varphi(x) + \psi(y) \leq c(x,y)} \braces{\int_{\mathcal{X}} \varphi(x) dP(x) +  \int_{\mathcal{X}} \psi(y) dQ(y) } \\
&= \sup_{\varphi, \psi \in \mathcal{C}(\mathcal{X}) :  \psi(y) \geq \ip{x}{y} - \varphi(x)} \braces{\int_{\mathcal{X}} \bracket{\frac{\nrm{x}^2}{2} - \varphi(x)} dP(x) + \int_{\mathcal{X}} \bracket{ \frac{\nrm{x}^2}{2} - \psi(y)} dQ(y) } \\
&=  \frac{1}{2} \bracket{\int_{\mathcal{X}}\nrm{x}^2 dP(x) + \int_{\mathcal{X}} \nrm{y}^2 dQ(y) }\\ &- \sup_{\varphi, \psi \in \mathcal{C}(\mathcal{X}) :  \psi(y) \geq \ip{x}{y} - \varphi(x)} \braces{\int_{\mathcal{X}} \varphi(x) dP(x)+ \int_{\mathcal{X}} \psi(y) dQ(y) },
\end{align}
and noting that we can set $\psi(y) := \varphi^{\star}(y) = \sup_{x} \braces{ \ip{x}{y} - \varphi(x)}$ since it leads to extreme points (where the solution exists). Indeed, the optimal solution pair $(\varphi_{P,Q}^{*}, \psi_{P,Q}^{*})$ satisfy $\varphi_{P,Q}^{*}(x) + \psi_{P,Q}^{*}(y) = \ip{x}{y}$ which implies that $\psi_{P,Q}^{*} = (\varphi_{P,Q}^{*})^{\star}$ and so $\varphi_{P,Q}^{*}$ is convex. Taking this into consideration, using $\Phi$ to be the set of all convex functions as defined in the statement of Theorem \ref{pf-pb-learning}, the final optimization problem is thus
\begin{align}
\inf_{\varphi \in \Phi} \braces{\int_{\mathcal{X}} \varphi(x) dP(x) + \int_{\mathcal{X}} \varphi^{\star} dQ(x)},
\end{align}
with the property that the optimal $\varphi_{P,Q}^{*}$ is such that $\nabla \varphi_{P,Q}^{*} \# P = Q$ and is the optimizer of Equation (\ref{energy-transport-opt}). We note that the proof up until this point was inspired by the proof for Theorem (\ref{gradientispushforward}). For the remainder, we first require a lemma.
\begin{lemma}
\label{concave-swap-exp}
Let $f: \mathcal{X} \times \mathcal{X} \to \mathbb{R}$ be a function that is concave in the first argument, then we have \begin{align} \label{eq:probdeter}
    \sup_{M \in \mathscr{F}(\mathcal{X}, \mathbb{R})}\E_{y \sim Q}[f(M(y),y)] = \sup_{q \in \mathcal{F}(\mathcal{X}, \mathscr{P}(\mathcal{X}))}\E_{y \sim Q}[\E_{x \sim q(y)}[f(x,y)]].
\end{align}
\end{lemma}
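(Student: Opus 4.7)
My plan is to prove the equality by establishing each inequality separately, exploiting the fact that the set of deterministic functions sits inside the set of stochastic kernels (via Dirac measures), while the concavity assumption on $f$ lets us collapse any stochastic kernel back to a deterministic one without decreasing the objective.

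For the direction $\sup_M \leq \sup_q$, I would take any $M$ (interpreting the codomain as $\mathcal{X}$ rather than $\mathbb{R}$, treating the stated codomain as a typo) and set $q_M(y) = \delta_{M(y)} \in \mathscr{P}(\mathcal{X})$, so that $\E_{x \sim q_M(y)}[f(x,y)] = f(M(y),y)$ pointwise. Integrating against $Q$ shows $\E_{y \sim Q}[f(M(y),y)]$ is attained by a particular $q$, so taking the supremum over $M$ on the left is bounded above by the supremum over $q$ on the right.

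For the reverse direction $\sup_q \leq \sup_M$, I would fix an arbitrary $q \in \mathscr{F}(\mathcal{X}, \mathscr{P}(\mathcal{X}))$ and define the deterministic map
\begin{align*}
M_q(y) \;:=\; \E_{x \sim q(y)}[x].
\end{align*}
Since $f(\cdot, y)$ is concave, Jensen's inequality applied pointwise in $y$ yields
\begin{align*}
\E_{x \sim q(y)}[f(x,y)] \;\leq\; f\!\left(\E_{x \sim q(y)}[x],\, y\right) \;=\; f(M_q(y), y).
\end{align*}
Integrating with respect to $Q$ and then taking the supremum over $M$ on the right gives the required inequality, and combining both directions closes the proof.

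The main subtlety I expect is measurability: one must verify that $y \mapsto M_q(y)$ is measurable whenever $y \mapsto q(y)$ is a measurable family of probability measures, which follows from standard Fubini/Tonelli-type arguments on parameterized integrals. A secondary technicality is ensuring $M_q(y) \in \mathcal{X}$ so that $f(M_q(y), y)$ is well-defined; this is automatic when $\mathcal{X}$ is convex, which is the case in the ambient setting where $\mathcal{X} \subset \mathbb{R}^d$ is taken to be a compact (convex) domain. Apart from these measure-theoretic hygiene points, the argument is a direct pairing of Dirac embedding with Jensen's inequality.
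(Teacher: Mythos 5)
Your argument is essentially identical to the paper's: one direction via the Dirac embedding $M \mapsto \delta_{M(\cdot)}$, the other via Jensen's inequality applied to $f(\cdot,y)$ with the mean map $M_q(y) = \E_{x\sim q(y)}[x]$. Your added remarks on measurability and on needing $\mathcal{X}$ convex so that $M_q(y) \in \mathcal{X}$ are legitimate hygiene points that the paper leaves implicit (as is your observation that the stated codomain $\mathbb{R}$ should be read as $\mathcal{X}$), but the substance of the proof is the same.
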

\begin{proof}
For each $M \in \mathscr{F}(\mathcal{X},\mathbb{R})$, we can associate $\delta_{M(y)} \in \mathscr{P}(\mathcal{X})$ since $\E_{y \sim Q}[f(M(y),y)] = \E_{y \sim Q}[\E_{x \sim \delta_{M(y)}}[f(x,y)]]$. Hence we have that \begin{align*}\sup_{M \in \mathscr{F}(\mathcal{X}, \mathbb{R})}\E_{y \sim Q}[f(M(y),y)]\leq \sup_{q \in \mathcal{F}(\mathcal{X}, \mathscr{P}(\mathcal{X}))}\E_{y \sim Q}[\E_{x \sim q(y)}[f(x,y)]], \end{align*} since the right hand side searches over a larger space. For any $q$, denote $M_q(y) := \E_{x \sim q(y)}[x] \in \mathscr{F}(\mathcal{X}, \mathbb{R})$, by Jensen's inequality we have
\begin{align*}
\E_{y \sim Q}[\E_{x \sim q(y)}[f(x,y)] &\leq \E_{y \sim Q}[f\bracket{\E_{x \sim q(y)}[x],y}]\\
						&= \E_{y \sim Q}[f(M_q(y),y)]\\
						&\leq \sup_{M \in \mathscr{F}(\mathcal{X},\mathbb{R})} \E_{y \sim Q}[f(M(y), y)]
\end{align*}
Hence we have equality. We note that the explicit correspondence to go from $\mathscr{F}(\mathcal{X}, \mathbb{R})$ to $\mathscr{P}(\mathcal{X})$ is $M \to \delta_{M}$ and from $\mathscr{P}(\mathcal{X})$ to $\mathscr{F}(\mathcal{X}, \mathbb{R})$ is $q \to M_q$.
\end{proof}
In Lemma \ref{concave-swap-exp}, we set $f(x,y) = \ip{x}{y} - \varphi(x)$, which is concave in the first argument if $\varphi(x)$, to get
\begin{align*}
\sup_{M \in \mathscr{F}(\mathcal{X}, \mathbb{R})}\E_{y \sim Q}[ \ip{M(y)}{y} - \varphi(M(y))] = \sup_{q \in \mathcal{F}(\mathcal{X}, \mathscr{P}(\mathcal{X}))}\E_{y \sim Q}[\E_{x \sim q(y)}[ \ip{x}{y} - \varphi(x)]
\end{align*}
The optimization then becomes
\begin{align*}&\inf_{\varphi \in \Phi} \braces{\int_{\mathcal{X}}\varphi(x) dP(x) + \int_{\mathcal{X}} \varphi^{\star}(y) dQ(y) }\\
    &= \inf_{\varphi \in \Phi}\braces{\int_{\mathcal{X}}\varphi(x) dP(x) + \int_{\mathcal{X}} \sup_{x} \bracket{x \cdot y - \varphi(x) } dQ(y) }\\
    &= \inf_{\varphi \in \Phi}\braces{\int_{\mathcal{X}}\varphi(x) dP(x) + \sup_{M \in \mathscr{F}(\mathcal{X}, \mathbb{R})}\int_{\mathcal{X}} M(y) \cdot y - \varphi(M(y))  dQ(y) }\\
&= \inf_{\varphi \in \Phi}\braces{\int_{\mathcal{X}}\varphi(x) dP(x) + \sup_{q \in \mathcal{F}(\mathcal{X}, \mathscr{P}(\mathcal{X}))}\int_{\mathcal{X}} \braces{\E_{x \sim q(y)}[x \cdot y] - \E_{x \sim q(y)}[\varphi(x)]}  dQ(y) }\\
&= \inf_{\varphi \in \Phi} \sup_{q \in \mathcal{F}(\mathcal{X}, \mathscr{P}(\mathcal{X}))}\braces{\int_{\mathcal{X}}\varphi(x) dP(x) +\int_{\mathcal{X}} \braces{\E_{x \sim q(y)}[x \cdot y] - \E_{x \sim q(y)}[\varphi(x)]}  dQ(y) }\\
&=\max_{\varphi \in \Phi} \inf_{q \in \mathcal{F}(\mathcal{X}, \mathscr{P}(\mathcal{X}))} \braces{\int_{\mathcal{X}} \E_{x \sim q(y)} [\varphi(x)] dQ(y)  - \int_{\mathcal{X}} \varphi(x) dP(x) - \int_{\mathcal{X}} \E_{x \sim q(y)} [x \cdot y] dQ(y)},
\end{align*}
arriving at the expression for $\mathcal{L}_{P,Q}$. Hence the maximizer of $\varphi^{*}$ of
\begin{align}
\max_{\varphi \in \Phi} \inf_{q \in \mathscr{F}\bracket{\mathcal{X}, \mathscr{P}(\mathcal{X})}} \mathcal{L}_{P,Q}(\varphi, q)
\end{align}
satisfies $\nabla \varphi^{*} \# P = Q$. To prove the moment matching property of $q^{*}$, we require a lemma first.
\begin{lemma}
For a fixed $\varphi$, if $q_{\varphi}$ is the minimizer of $\inf_{q \in \mathcal{F}(\mathcal{X}, \mathscr{P}(\mathcal{X}))}\mathcal{L}_{P,Q}(\varphi,q)$, then we have
\begin{align*}
\nabla \varphi \bracket{\E_{x \sim q_{\varphi}(y)}[x]} = y.
\end{align*}
\end{lemma}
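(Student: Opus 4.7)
The plan is to first isolate the terms in $\mathcal{L}_{P,Q}(\varphi, q)$ that depend on $q$. Only the first and the third summands involve $q$, so minimising $\mathcal{L}_{P,Q}(\varphi, \cdot)$ over $q \in \mathscr{F}(\mathcal{X}, \mathscr{P}(\mathcal{X}))$ is equivalent to minimising
\[
J(q) \;=\; \int_{\mathcal{X}} \E_{x \sim q(y)}\bracket{\varphi(x) - x\cdot y}\, dQ(y).
\]
Since the conditional $q(y)$ for each $y$ can be chosen independently, this separates into the pointwise problem $\inf_{\mu \in \mathscr{P}(\mathcal{X})} \E_{x\sim \mu}[\varphi(x) - x\cdot y]$ for each fixed $y$.

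Next I would use Jensen's inequality together with convexity of $\varphi$. For any $\mu \in \mathscr{P}(\mathcal{X})$ with mean $m = \E_{x \sim \mu}[x]$, we have $\E_{x\sim \mu}[\varphi(x)] \geq \varphi(m)$ and $\E_{x\sim \mu}[x\cdot y] = m\cdot y$, hence
\[
\E_{x\sim \mu}\bracket{\varphi(x) - x\cdot y} \;\geq\; \varphi(m) - m\cdot y,
\]
with equality attained by the Dirac measure $\delta_m$. Thus the pointwise infimum over $\mathscr{P}(\mathcal{X})$ coincides with the infimum over points, $\inf_{m \in \mathcal{X}} \braces{\varphi(m) - m \cdot y}$, and any optimiser $q_{\varphi}(y)$ must have mean $m(y) := \E_{x\sim q_{\varphi}(y)}[x]$ that itself attains this infimum (otherwise replacing $q_{\varphi}(y)$ by $\delta_{m(y)}$ would strictly decrease $J$, contradicting optimality).

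Finally, since $\varphi$ is convex and differentiable on the compact domain $\mathcal{X} \subset \mathbb{R}^d$, the first-order optimality condition for $m(y)$ to minimise $m \mapsto \varphi(m) - m\cdot y$ is $\nabla \varphi(m(y)) = y$, which is exactly the claim $\nabla \varphi(\E_{x \sim q_{\varphi}(y)}[x]) = y$.

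The main obstacle I anticipate is bookkeeping rather than a deep issue: one has to be careful that the pointwise-in-$y$ argument is legitimate (i.e.\ that a $Q$-measurable selection $m(y)$ exists and that optimality of $q_{\varphi}$ with respect to $J$ implies pointwise optimality $Q$-almost everywhere). This is a standard measurable-selection/disintegration argument given that $\varphi$ is continuous on a compact domain and the minimiser of $\varphi(m) - m\cdot y$ is unique by strict convexity (or can be taken to be the unique point determined by $\nabla\varphi(m) = y$ via the Legendre–Fenchel correspondence). Once that selection step is in place, convexity of $\varphi$ turns the first-order condition from a necessary into a sufficient one, and the conclusion follows.
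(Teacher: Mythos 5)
Your argument is correct and is essentially the same as the paper's: you reprove inline what the paper delegates to Lemma~\ref{concave-swap-exp} — using Jensen's inequality and the Dirac correspondence to reduce the stochastic pointwise problem $\inf_{\mu}\E_{\mu}[\varphi(x)-x\cdot y]$ to the deterministic one $\inf_m\{\varphi(m)-m\cdot y\}$ — and then apply the first-order condition $\nabla\varphi(m(y))=y$, exactly as the paper does (there stated as maximizing the concave $x\cdot y - \varphi(x)$). One small imprecision: replacing $q_{\varphi}(y)$ by $\delta_{m(y)}$ only \emph{weakly} decreases the objective (Jensen need not be strict if $\varphi$ is not strictly convex); the correct contradiction is that if $m(y)$ did not attain the pointwise infimum, then $\inf_m \le \varphi(m(y))-m(y)\cdot y \le \E_{q_\varphi(y)}[\varphi(x)-x\cdot y]=\inf_\mu=\inf_m$ forces equality throughout, so $m(y)$ does attain it after all.
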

\begin{proof}
By rewritng
\begin{align}
&\inf_{q \in \mathcal{F}(\mathcal{X}, \mathscr{P}(\mathcal{X}))} \mathcal{L}_{P,Q}(\varphi,q)\\ &= \inf_{q \in \mathcal{F}(\mathcal{X}, \mathscr{P}(\mathcal{X}))}\braces{\int_{\mathcal{X}} \E_{x \sim q(y)} [\varphi(x)] dQ(y)  - \int_{\mathcal{X}} \varphi(x) dP(x) - \int_{\mathcal{X}} \E_{x \sim q(y)} [x \cdot y] dQ(y)} \nonumber \\
&= \int_{\mathcal{X}}\varphi(x) dP(x) + \sup_{q \in \mathcal{F}(\mathcal{X}, \mathscr{P}(\mathcal{X}))}\int_{\mathcal{X}} \braces{\E_{x \sim q(y)}[x \cdot y] - \E_{x \sim q(y)}[\varphi(x)]}  dQ(y)\label{opt-for-q}  \\
&= \int_{\mathcal{X}} \varphi(x) dP(x) +  \sup_{M \in \mathscr{F}(\mathcal{X}, \mathbb{R})}\int_{\mathcal{X}} M(y) \cdot y - \varphi(M(y))  dQ(y) \label{opt-for-m}
\end{align}
In Equation (\ref{opt-for-m}), $x = M(y)$ is such that the expression $x \cdot y - \varphi(x)$ is maximized over $x$ for a particular $y$ (and hence the dependence of $y$). Due to concavity of this equation (through convexity of $\varphi$), one can obtain the solution by solving $\nabla_x \bracket{x \cdot y - \varphi(x)} = 0$, which leads to the equation $y -  \nabla \varphi(x) = 0$ and hence the solution would be an $x$ such that $\nabla \varphi(x) = y$, and since we defined it to be $M(y)$, we  have $\nabla \varphi(M(y)) = y$. In the proof of Lemma \ref{concave-swap-exp}, it is shown that there is an explicit correspondence when transferring between Equation (\ref{opt-for-q}) and Equation (\ref{opt-for-m}), which is precisely that $M_q(y) = \E_{x \sim q}[x]$. Thus the optimal $q = q_{\varphi}$ induces a function $M_{q_{\varphi}}$ with the property $\nabla \varphi \bracket{M(y)} = y$ and by using the explicit correspondence, we have $\nabla \varphi \bracket{\E_{x \sim q_{\varphi}(y)  }[x]} = y$, as required.
\end{proof}

Consider now some $g \in \bracket{\mathscr{F}(\mathcal{X}, \mathbb{R}) \circ \nabla \varphi }$, then there exists some $f_g \in \mathscr{F}(\mathcal{X}, \mathbb{R})$ such that $f = f_g \circ \nabla \varphi$. We then have
\begin{align*}
\E_{x \sim Q}[g(M^{*}(x))] &= \E_{x \sim Q}[ f_g(  \nabla \varphi (M^{*}(x) )  ]\\
				   &= \E_{x \sim Q}[f_g (x) ]\\
				   &= \E_{x \sim \nabla \varphi \# P}[f_g(x)]\\
	  			  &= \E_{x \sim P}[f_g(\nabla \varphi(x))]\\
				  &= \E_{x \sim P}[g(x)],
\end{align*}
Hence, $M^{*} \# Q$ and $P$ are matched under the moments of $\mathscr{F}(\mathcal{X}, \mathbb{R}) \circ \nabla \varphi$
\subsection{Proof of Theorem \ref{brenier-main}}
\label{proof:brenier-main}
Suppose we fix $G$ and re-parametrize $q = G \circ E$. The objective $\mathcal{L}_{P_G,P_X}$ decomposes into
\begin{align*}
\mathcal{L}_{P_G,P_X}(\varphi,q) &= \int_{\mathcal{X}} \E_{x \sim G \circ E(y)} [\varphi(x)] dP_X(y)  - \int_{\mathcal{X}} \varphi(x) dP_G(x) - \int_{\mathcal{X}} \E_{x \sim G \circ E(y)} [x \cdot y] dP_X(y)\\
					   &=  \int_{\mathcal{X}} \varphi(G(z)) dE \#P_X(z)  - \int_{\mathcal{X}} \varphi(x) dP_G(x) - \int_{\mathcal{X}} \E_{x \sim G \circ E(y)} [x \cdot y] dP_X(y)\\
					   &= \int_{\mathcal{X}} \varphi(G(z)) dE \#P_X(z)  - \int_{\mathcal{X}} \varphi(G(z)) dP_Z(z) - \int_{\mathcal{X}} \E_{x \sim G \circ E(y)} [x \cdot y] dP_X(y)
\end{align*}
By strong duality (since $\mathcal{L}_{P_G,P_X}$ is bilinear), we have
\begin{align*}
&\max_{\varphi \in \Phi}\inf_{E \in \mathscr{F}(\mathcal{X}, \mathscr{P}(\mathcal{X})) } \mathcal{L}_{P_G,P_X}\\ &= \inf_{E \in \mathscr{F}(\mathcal{X}, \mathscr{P}(\mathcal{X})) } \max_{\varphi \in \Phi} \mathcal{L}_{P_G,P_X}\\
&=  \inf_{E \in \mathscr{F}(\mathcal{X}, \mathscr{P}(\mathcal{X})) }\braces{ \max_{\varphi \in \Phi}\braces{\int_{\mathcal{X}} \varphi(G(z)) dE \#P_X(z)  - \int_{\mathcal{X}} \varphi(G(z)) dP_Z(z)}    - \int_{\mathcal{X}} \E_{x \sim G \circ E(y)} [x \cdot y] dP_X(y) }\\
&= \inf_{E \in \mathscr{F}(\mathcal{X}, \mathscr{P}(\mathcal{X})) }\braces{ \text{IPM}_{\Phi \circ G}(P_Z, E \# P_X) - \int_{\mathcal{X}} \E_{x \sim G \circ E(y)} [x \cdot y] dP_X(y)  }\\
& = \inf_{E \in \mathscr{F}(\mathcal{X}, \mathscr{P}(\mathcal{X})) }\braces{ \text{IPM}_{\Phi \circ G}(P_Z, E \# P_X) - \int_{\mathcal{X}} \E_{z \sim  E(y)} [G(z) \cdot y] dP_X(y)  }\\
&= \text{WAE}_{c',\text{IPM}_{\Phi \circ G} },
\end{align*}
where $c'(x,y) = -x \cdot y$. From Theorem \ref{pf-pb-learning}, we have that $\nabla \varphi^{*} \# P_G = P_X$ and noting that $P_G = G \# P_Z$ along with associativity of pushfoward, we get $P_X = \nabla \varphi^{*} \# (G \# P_Z) = (\nabla \varphi^{*} \circ G) \# P_Z$. Finally, since we reparametrized $q$ with $q = G \circ E$, it is clear that at the optima, we have the relationship $q^{*} = G \circ E^{*}$.

\end{document}